\documentclass[11pt]{article}

\title{Multiplicative Oracle Inequalities for Transductive Learning via Level-Set Aggregation}
\usepackage{times}

\author{%
Jian Qian\\
The University of Hong Kong\\
\texttt{jianqian@hku.hk}
\and
Jiachen Xu\\
Hong Kong University of Science and Technology\\
\texttt{jxuec@connect.ust.hk}
}
\date{}

\newcommand{\BlackBox}{\rule{1.5ex}{1.5ex}}
\usepackage{amsthm}
\renewenvironment{proof}[1][Proof]{\par\noindent{\bf #1\ }}{\hfill\BlackBox\\[2mm]}

\newcommand{\nonl}{\renewcommand{\nl}{\let\nl}}

\usepackage[utf8]{inputenc}
\usepackage[T1]{fontenc}
\usepackage[margin=1in]{geometry}
\usepackage{amsfonts}
\usepackage{amssymb}
\usepackage[round]{natbib}

\usepackage{mathtools}
\usepackage{mathrsfs}
\usepackage{xcolor}
\usepackage{hyperref}
\usepackage{booktabs}
\usepackage{bbold}
\usepackage{bm}
\usepackage{enumitem}
\usepackage{framed}
\usepackage{comment}
\usepackage{bbm}
\usepackage{etoc}
\usepackage{ifthen}

\usepackage{algorithm}
\usepackage{algpseudocode}

\usepackage{cleveref}
\usepackage{aliascnt}

\newenvironment{keywords}
{\par\smallskip\noindent\textbf{Keywords:} }
{\par\smallskip}

\newtheorem{theorem}{Theorem}

\makeatletter
\@ifundefined{lemma}{}{%

}
\makeatother

\newaliascnt{lemma}{theorem}
\newtheorem{lemma}[lemma]{Lemma}
\aliascntresetthe{lemma}

\crefname{lemma}{Lemma}{Lemmas}
\Crefname{lemma}{Lemma}{Lemmas}

\makeatletter
\@ifundefined{proposition}{}{%

}
\makeatother

\newaliascnt{proposition}{theorem}
\newtheorem{proposition}[proposition]{Proposition}
\aliascntresetthe{proposition}

\crefname{proposition}{Proposition}{Propositions}
\Crefname{proposition}{Proposition}{Propositions}

\makeatletter
\@ifundefined{definition}{}{%

}
\makeatother

\newaliascnt{definition}{theorem}
\newtheorem{definition}[definition]{Definition}
\aliascntresetthe{definition}

\crefname{definition}{Definition}{Definitions}
\Crefname{definition}{Definition}{Definitions}

\makeatletter
\@ifundefined{remark}{}{%

}
\makeatother

\newaliascnt{remark}{theorem}
\newtheorem{remark}[remark]{Remark}
\aliascntresetthe{remark}

\crefname{remark}{Remark}{Remarks}
\Crefname{remark}{Remark}{Remarks}

\makeatletter
\@ifundefined{assumption}{}{%

}
\makeatother

\newaliascnt{assumption}{theorem}
\newtheorem{assumption}[assumption]{Assumption}
\aliascntresetthe{assumption}

\crefname{assumption}{Assumption}{Assumptions}
\Crefname{assumption}{Assumption}{Assumptions}

\makeatletter
\@ifundefined{corollary}{}{%

}
\makeatother

\newaliascnt{corollary}{theorem}
\newtheorem{corollary}[corollary]{Corollary}
\aliascntresetthe{corollary}

\crefname{corollary}{Corollary}{Corollaries}
\Crefname{corollary}{Corollary}{Corollaries}

\renewcommand{\cref}{\Cref}
\newcommand{\pfref}[1]{Proof of \Cref{#1}}

\addtocontents{toc}{\protect\setcounter{tocdepth}{0}}

\definecolor{jmlrblue}{HTML}{0000BB}

\hypersetup{
  colorlinks=true,
  linkcolor={jmlrblue},
  citecolor={jmlrblue},
  urlcolor={jmlrblue}
}

\newcommand{\bR}{\mathbb{R}}

\newcommand{\norm}[1]{\lVert {#1} \rVert}

\renewcommand{\leq}{\leqslant}
\renewcommand{\geq}{\geqslant}
\renewcommand{\le}{\leqslant}
\renewcommand{\ge}{\geqslant}
\newcommand{\argmin}{\mathop{\mathrm{arg}\,\mathrm{min}}}

\newcommand{\wh}{\widehat}

\renewcommand{\ln}{\log}

\newcommand{\eps}{\varepsilon}

\newcommand{\ldef}{\vcentcolon=}

\newcommand{\cG}{\mathcal{G}}
\newcommand{\cX}{\mathcal{X}}
\newcommand{\cY}{\mathcal{Y}}

\newcommand{\cH}{\mathcal{H}}

\usepackage{xspace}

\newtheorem*{theorem*}{Theorem}

\DeclarePairedDelimiter{\prn}{(}{)}

\DeclarePairedDelimiter{\set}{\{}{\}}

\definecolor{myRed}{HTML}{D24D5C}
\definecolor{myGreen}{RGB}{46,139,87}

\usepackage{color-edits}
\addauthor{jq}{myRed}
\addauthor{jx}{myGreen}

\newcommand{\cT}{\mathcal{T}}

\newcommand{\agg}{\mathrm{Agg}}
\newcommand{\R}{\mathbb{R}}
\newcommand{\tol}{t}
\newcommand{\med}{\mathrm{med}}
\newcommand{\cP}{\mathcal{P}}
\newcommand{\yhat}{\widehat{y}}

\newcommand{\mlsa}{\textsc{MLSA}\xspace}
\newcommand{\maxtol}{\tol_{\max}}

\begin{document}

\maketitle

\begin{abstract}
We revisit transductive learning where predictions are made with the set of all covariates known in advance.
In the leave-one-out (LOO) setting, the prediction is made with labels of the remaining sample points and evaluated by the average error.
In particular, we study multiplicative oracle
inequalities for agnostic transductive LOO prediction for a variety of tasks, including classification with 0-1 loss, squared loss regression, density estimation, and logistic regression. \looseness=-1

Specifically, we introduce \emph{Median of Level-Set Aggregation} (MLSA), an aggregation procedure built on near--ERM level sets (i.e., empirical-risk level sets around the ERM).
We prove a general multiplicative oracle inequality for the LOO error of the form
\[
\mathrm{LOO}_S(\mlsa)
\;\le\;
C \prn*{ \frac{1}{n}\min_{h\in\cH} L_S(h)
\;+\;
\frac{\log|\cH|}{n}},
\qquad C>1,
\]
where $\cH$ is the hypothesis/function class.
This inequality holds for hypothesis classes under a local level-set growth condition together with losses satisfying a mild monotonicity assumption.
For classification with VC classes under the $0$--$1$ loss, the $\log |\cH|$ factor can be improved to be $d\log n$, where $d$ is the VC dimension, recovering \cite{long1998complexity} up to a $\log n$ factor.
For logistic regression with bounded covariates and parameters, the $\log |\cH|$ factor can be improved to be $d\log n$ up to problem-dependent factors, where $d$ is the ambient dimension.\looseness=-1
\end{abstract}

\begin{keywords}%
Leave-one-out prediction; transductive learning; binary classification; regression; density estimation; logistic regression
\end{keywords}

\section{Introduction}
\label{sec:intro}

Transductive leave-one-out (LOO) prediction is a finite-sample problem.  We are given a labeled sequence
\(S=\{(x_i,y_i)\}_{i=1}^n\).  For each index \(i\), the learner observes all covariates \(x_1,\ldots,x_n\) and all labels except \(y_i\), and must predict the missing label at the observed covariate \(x_i\).  The loss is averaged over the \(n\) possible held-out indices:
\[
    \mathrm{LOO}_S(\mathcal A)
    :=
    \frac1n\sum_{i=1}^n \ell(\hat y_i,y_i),
\]
where \(\hat y_i\) is produced without seeing \(y_i\) and $\mathcal A$ is the algorithm.  There is no distribution in this definition.  The guarantee is about the fixed sample \(S\) itself.

The transductive viewpoint goes back to early work of
\citet{vapnik1974theory,vapnik1982estimation,vapnik1995nature}. For binary classification in the LOO setup,
\citet{haussler1994predicting} introduced the one-inclusion-graph
(OIG) method and obtained the benchmark \(d/n\) rate for realizable
VC classes.  \citet{long1998complexity} extended the
graph-orientation idea to arbitrary binary labelings and obtained a
multiplicative oracle inequality for agnostic \(0\)--\(1\)
classification, with a constant-factor comparator term and a
remainder of order \(d/n\).  A broad literature has since developed
OIG and related transductive methods for various domains including multiclass, partial,
scale-sensitive, and agnostic learning
\citep{rubinstein2009shifting,daniely2014optimal,
bartlett1998prediction,attias2023optimal,
asilis2024regularization,dughmi2025transductive}.

LOO ideas have also been developed for regression and statistical estimation.  
\citet{forster2002relative}
gave a general route from LOO guarantees to expected
instantaneous-loss guarantees and instantiated it for Gaussian and
Bernoulli density estimation and linear regression. For linear regression, the proposed estimator achieved the benchmark $d/n$ rate for $d$ dimensional space.
\citet{zhang2003leave} used LOO quantities to analyze regularized
kernel predictors through stability and RKHS structure.  More
recently,
\citet{mourtada2022improper,mourtada2021distributionfree} developed sharp procedures for misspecified
density estimation, logistic regression, and robust linear regression.  For logistic regression, \citet{mourtada2022improper} achieved $(ed+r^2R^2)/n$ rate where $r$ and $R$ bound the norm of the covariates and parameters.

Another thread of literature studies reductions between
transductive and PAC learning. \citet{warmuth2004optimal} asked whether optimal transductive learners can be used as optimal PAC learners.  This question led to line of work including \cite{wu2022expected,adenali2023optimal,adenali2023notalways,dughmi2025transductive}. In the agnostic
setting, this comparison is usually phrased in terms of additive
excess error and leads to the slower square-root scale; for binary
classification, for example, the benchmark rate is of order
\(\sqrt{d/n}\).  This is a different objective from the
complexity-over-\(n\) remainder pursued here.

Our goal is to obtain multiplicative, fixed-sample LOO guarantees through a
single method that applies beyond classification.  We introduce
\emph{Median of Level-Set Aggregation} (\mlsa), a unified construction for
hypothesis classes and losses satisfying a mild monotonicity condition.
Across all of our applications, the algorithm has the same two-stage form:
it aggregates predictions within near-ERM level sets and then takes a median
across tolerance levels.  Only the loss-specific aggregation rule and the
control of level-set growth change from one setting to another.

The guarantee we seek is a multiplicative oracle
inequality.  In its simplest form,
\begin{equation}
    \label{eq:intro-multiplicative-oracle}
    \mathrm{LOO}_S(\mathcal A)
    \leq
    C\prn*{
        \frac1n\min_{h\in\cH} L_S(h)
        +
        \frac{\mathrm{comp}(\cH)}{n}
    },
    \qquad C>1,
\end{equation}
where
\(L_S(h)=\sum_{i=1}^n\ell(h(x_i),y_i)\) and
\(\mathrm{comp}(\cH)\) is the relevant complexity term.  The first
term tracks the performance of the best comparator on the same
sample.  The second is a fast complexity-over-\(n\) remainder.
Thus the framework trades a constant factor on the comparator loss
for a fast remainder and a common construction across several
losses.

The algorithm is simple.  For each leave-one-out sample \(S_{-i}\)
and each tolerance level \(t\), we form the set of hypotheses whose
empirical loss on \(S_{-i}\) is within \(t\) of the optimum.  We
aggregate the predictions in this level set at \(x_i\), and then take
a median over the tolerance levels.  The analysis uses one
structural condition: empirical level sets should not grow too
quickly when the tolerance is enlarged.  Once this local growth
condition is verified, the same argument gives the LOO oracle
inequality.

Specialized OIG methods and linear square-loss procedures give the
sharpest benchmarks in binary classification and linear regression.
In those cases, our results show how the level-set principle connects
to known fast-rate behavior.  The broader contribution is that the
same MLSA framework also yields fixed-sample multiplicative LOO
guarantees for finite classes under bounded convex losses, finite
density classes under log loss, and bounded logistic regression.

Our contributions are as follows.
\begin{enumerate}[leftmargin=2em]
    \item We introduce \mlsa\ and prove a deterministic fixed-sample
    LOO oracle inequality under a local level-set growth condition together with a mild monotonicity assumption. This algorithmic principle is general and valid across all the tasks considered in previous literature. \looseness=-1
    \item We obtain fast rates comparable, up to a logarithmic factor, to the state-of-the-art results in all the tasks considered.
    \begin{itemize}[leftmargin=1em]
        \item For binary classification with VC dimension \(d\), we
        obtain
        \[
            \mathrm{LOO}_S(\mlsa)
            \leq
            C\prn*{\frac{1}{n}\min_{h\in\cH}L_S(h)
            +
            \frac{d\log n}{n} }.
        \]
        This is the basic discrete example of the framework and recovers
        the fast \(d/n\) scale up to a log term \citep{long1998complexity}. 
        \item For finite hypothesis classes and bounded convex losses,
        we obtain a multiplicative LOO oracle inequality with logarithmic
        dependence on \(|\cH|\). 
        \[
            \mathrm{LOO}_S(\mlsa)
            \leq
            C \prn*{\frac{1}{n}\min_{h\in\cH}L_S(h)
            +
            \frac{\log |\cH|}{n}}.
        \]
        This covers finite-class regression
        problems, including bounded square loss. The general result is new to the best of our knowledge.
        In the special case of bounded linear regression, this recovers the fast \(d/n\)-type scale up to a log term \citep{forster2002relative}.
        \item For finite density classes under log loss, we prove the
        corresponding oracle inequality under bounded log-likelihood
        ratios and give a smoothing construction that enforces this
        boundedness. This result is new to the best of our knowledge.
        \item For bounded logistic regression, we control the growth of empirical-risk level sets through their geometry.  
        This gives an
        additive term of order
        $O((r + \sqrt{rR/\lambda_{\min}(A)}) R\,d\log(nrR)/n)$, where \(r\) bounds the parameter norm,
        \(R\) bounds the covariate norm, and $A$ is the empirical covariance matrix. Our result is comparable to the best known result from \citet{mourtada2022improper}.
    \end{itemize}
\end{enumerate}

The rest of the paper is organized as follows.  \Cref{sec:setup} formalizes the transductive LOO setting and the multiplicative oracle-inequality objective.  \Cref{sec:overview} presents \mlsa and proves the general level-set theorem.  \Cref{sec:classification,sec:regression,sec:density-estimation,sec:logistic} instantiate the theorem for classification, bounded convex losses, density estimation, and logistic regression.

\section{Problem Setup}
\label{sec:setup}

\paragraph{Notation} For any integer $n$, $[n]\ldef \set{1,2,...,n}$. $\mathbf{1}(E)$ is the indicator function for $E$. For any finite set of values $\set{y_t}_{t\in \cT}$, $\med(\set{y_t}_{t\in \cT}) \ldef \inf \prn*{ \argmin_{y\in \bR}\sum_{t\in \cT}|y_t-y|}$ denotes the smallest median of the set.  \looseness=-1

Let $S=\{(x_i,y_i)\}_{i=1}^n$ be an arbitrary sequence with covariates
$x_i\in\cX$ and responses $y_i\in\cY$.
Fix a hypothesis class $\cH$ and a loss function
$\ell:\cY\times\cY\to\R_+$.
A learning algorithm $\mathcal{A}$ maps any training set $S'\subseteq S$ to a predictor
$h_{S'}\in\cH$. Note that in our transductive setup $\mathcal{A}$ always has access to the full covariates set $\set{x_i}_{i\in [n]}$.

For each $i\in[n]$, let $S_{-i}:=S\setminus\{(x_i,y_i)\}$ and denote by
$h_{S_{-i}}$ the predictor obtained by applying $\mathcal{A}$ to $S_{-i}$.
The leave-one-out error of $\mathcal{A}$ on $S$ is
\[
\mathrm{LOO}_S(\mathcal{A})
:= \frac{1}{n}\sum_{i=1}^n \ell\bigl(h_{S_{-i}}(x_i),y_i\bigr).
\]
For any predictions $\{\hat y_i\}_{i=1}^n$, we also write
\[
\mathrm{LOO}_S(\{\hat y_i\}_{i\in [n]})
:= \frac{1}{n}\sum_{i=1}^n \ell(\hat y_i,y_i).
\]

Our objective is to minimize the LOO error. This framework subsumes standard learning problems,
including binary classification (\cref{sec:classification}), regression (\cref{sec:regression}),
density estimation (\cref{sec:density-estimation}), and logistic regression
(\cref{sec:logistic}), which we instantiate in the corresponding sections.

For a hypothesis $h\in\cH$, define the empirical risks
\[
L_S(h) := \sum_{i=1}^n \ell(h(x_i),y_i),
\qquad
L_{S_{-i}}(h) := \sum_{j\neq i} \ell(h(x_j),y_j).
\]

We study guarantees of the form
\begin{equation}
\label{eq:multiplicative-oracle}
\mathrm{LOO}_S(\mathcal{A})
\;\le\;
C \prn*{ \frac{1}{n} \min_{h\in \cH}L_S(h)
\;+\;
\frac{\mathrm{comp}(\cH)}{n}},
\qquad C>1.
\end{equation}

\section{Median of Level-Set Aggregation}
\label{sec:overview}

In this section, we introduce the \emph{Median of Level-Set Aggregation} (\mlsa)
procedure, formalized in \cref{alg:levelset-aggregation}.
The algorithm consists of two aggregation layers: an inner layer produces leave-one-out predictions by aggregating hypotheses
lying in suitable empirical level sets, and an outer median aggregation over
a grid of tolerance levels.

For a tolerance level $t\ge 0$, define the full-sample and leave-one-out
level sets
\[
\cH_t
:= \{h\in\cH: L_S(h)\le \min_{h\in \cH}L_S(h)+t\},
\qquad
\cH_{t,i}
:= \{h\in\cH: L_{S_{-i}}(h)\le \min_{h\in \cH} L_{S_{-i}}(h)+t\}.
\]
For each index $i$ and tolerance $t$, the algorithm forms a prediction by
aggregating the evaluations of hypotheses in $\cH_{t,i}$ at $x_i$,
\[
\hat y_{t,i}
:= \agg(\cH_{t,i},x_i)
:= \agg\bigl(\{h(x_i): h\in\cH_{t,i}\}\bigr),
\]
where the aggregation rule $\agg$ is chosen according to the loss and output
structure.

In general, we require the aggregation rule to be stable, in the
sense that the loss of the aggregated prediction is controlled by the average
loss of the individual hypotheses being aggregated.

\begin{assumption}
\label{ass:agg}
Let $\mu$ be a measure on $\cH$.
For a fixed constant $c>0$, all $\cG\subseteq\cH$ and all $(x,y)$,
\[
\ell(\agg(\cG,x),y)
\;\le\;
\frac{c}{\mu(\cG)}\int_{\cG}\ell(h(x),y)\,\mu(dh).
\]
\end{assumption}

This assumption is mild and is satisfied by standard aggregation rules,
including majority vote for classification and averaging for convex losses.

The algorithm then aggregates across tolerance levels by taking the median of
the predictions $\{\hat y_{t,i}\}_{t\in\cT}$ over a prescribed tolerance grid
$\cT\subset\R_+$. The full procedure is summarized in
\cref{alg:levelset-aggregation}.

\begin{algorithm}[H]
\caption{Median of Level-Set Aggregation (\mlsa)}
\label{alg:levelset-aggregation}
\begin{algorithmic}[1]
\Require Sample $S$, hypothesis class $\cH$, loss $\ell$,
aggregation rule $\agg$, tolerance set $\cT\subset\R_+$
\For{$i=1,\dots,n$}
  \For{$t\in\cT$}
    \State Compute $\cH_{t,i}$
    \State $\hat y_{t,i} \gets \agg(\cH_{t,i},x_i)$
  \EndFor
  \State $\hat y_i \gets \med(\{\hat y_{t,i}\}_{t\in\cT})$
\EndFor
\State \Return $\{\hat y_i\}_{i=1}^n$
\end{algorithmic}
\end{algorithm}

In the next sections, we identify conditions under which these two aggregation
layers are effective. In \cref{sec:local-level-set-growth}, we show that the
inner aggregation layer satisfying \cref{ass:agg} yields meaningful guarantees
whenever the relevant level sets exhibit controlled growth in size
(\cref{ass:key}). In \cref{sec:median-aggregation-over-tol}, we show that the
outer aggregation layer resolves the choice of tolerance: if a strict majority
of tolerance levels in a prescribed grid satisfy the same growth condition
(\cref{ass:grid-key}), then median aggregation over the grid yields a valid
guarantee.

\subsection{Local Level-Set Growth}
\label{sec:local-level-set-growth}

\begin{assumption}[Local level-set growth]
\label{ass:key}
Let $\mu$ be a measure on $\cH$.
For tolerance $t$, gap $\Delta>0$, and constant $C_g\ge1$, we say that $(\cH,\ell)$ satisfies
the local level-set growth condition with parameters $(\mu,t,\Delta,C_g)$ if
\[
\frac{\mu(\cH_{t+\Delta})}{\mu(\cH_{t-\Delta})}\le C_g \quad\text{and}\quad \cH_{t-\Delta}\subseteq \cH_{t,i}\subseteq \cH_{t+\Delta}, \quad\text{for all~}i\in [n].
\]
\end{assumption}

Intuitively, this condition requires that the size of the level set does not increase too
rapidly when the tolerance is locally perturbed.
Throughout the paper, $\mu$ will be a counting measure for finite classes or an appropriate
volume measure in continuous settings. The gap $\Delta$ is chosen to upper bound the maximal
single-sample loss, which enforces the inclusion condition $\cH_{t-\Delta}\subseteq \cH_{t,i}\subseteq \cH_{t+\Delta}$ (\cref{lem:level-set-sandwich}).

\begin{lemma}
\label{lem:level-set-sandwich}
Assume $0\le \ell(y,y')\le \Delta$ for all $y,y'\in\cY$.
Then for every $i\in[n]$ and every $t\ge 0$, $\cH_{t-\Delta}
\;\subseteq\;
\cH_{t,i}
\;\subseteq\;
\cH_{t+\Delta}$.
\end{lemma}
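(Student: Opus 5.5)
The plan is a direct sandwich argument comparing the full-sample and leave-one-out empirical risks. Write $m:=\min_{h\in\cH}L_S(h)$ and $m_i:=\min_{h\in\cH}L_{S_{-i}}(h)$; if the minima are not attained, the same argument works with infima. The key observation is that for every $h\in\cH$,
\[
L_S(h)=L_{S_{-i}}(h)+\ell(h(x_i),y_i),\qquad\text{so}\qquad L_{S_{-i}}(h)\le L_S(h)\le L_{S_{-i}}(h)+\Delta,
\]
because $0\le\ell\le\Delta$. Taking minima over $h$ in this two-sided bound gives $m_i\le m\le m_i+\Delta$.

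For the first inclusion $\cH_{t-\Delta}\subseteq\cH_{t,i}$, I take $h$ with $L_S(h)\le m+t-\Delta$. Then I chain
\[
L_{S_{-i}}(h)\le L_S(h)\le m+t-\Delta\le m_i+\Delta+t-\Delta=m_i+t,
\]
so $h\in\cH_{t,i}$. If $t<\Delta$, the set $\cH_{t-\Delta}$ may be empty, or it may be defined with a negative tolerance; the same chain still applies, so no special case is needed.

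For the second inclusion $\cH_{t,i}\subseteq\cH_{t+\Delta}$, I take $h$ with $L_{S_{-i}}(h)\le m_i+t$. Then I chain
\[
L_S(h)\le L_{S_{-i}}(h)+\Delta\le m_i+t+\Delta\le m+t+\Delta,
\]
using $m_i\le m$, so $h\in\cH_{t+\Delta}$.

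I do not expect any real obstacle. The only point needing care is getting the direction of each minimum comparison right: the first inclusion uses $m\le m_i+\Delta$, and the second uses $m_i\le m$.
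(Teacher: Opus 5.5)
Your proof is correct and matches the paper's argument: the paper also first establishes $\min_h L_S(h)-\Delta\le \min_h L_{S_{-i}}(h)\le \min_h L_S(h)$, then runs the same two chains of inequalities. The only difference is that the paper gets this bound by plugging in the minimizers $h^\star$ and $h^\star_{-i}$ rather than taking minima of your pointwise bound, which is equivalent; your remark that infima suffice when minima are not attained is a small bonus.
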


\begin{proof}[\pfref{lem:level-set-sandwich}]
Define $h^\star\in\arg\min_{h\in\cH}L_S(h)$ and $h_{-i}^\star\in\arg\min_{h\in\cH}L_{S_{-i}}(h)$. By optimality of $h^\star_{-i}$ for $S_{-i}$,
\[
L_{S_{-i}}(h^\star_{-i})
\le L_{S_{-i}}(h^\star)
= L_S(h^\star) - \ell(h^\star(x_i),y_i)
\le L_S(h^\star).
\]
Conversely, by optimality of $h^\star$ for $S$,
\[
L_S(h^\star)
\le L_S(h^\star_{-i})
= L_{S_{-i}}(h^\star_{-i}) + \ell(h^\star_{-i}(x_i),y_i)
\le L_{S_{-i}}(h^\star_{-i}) + \Delta.
\]
Combining,
\begin{align}
\label{eq:level-set-sandwich}
   L_S(h^\star) - \Delta
\;\le\;
L_{S_{-i}}(h^\star_{-i})
\;\le\;
L_S(h^\star). 
\end{align}
Let $h\in\cH_{t,i}$. Then
\[
L_{S_{-i}}(h)
\le L_{S_{-i}}(h^\star_{-i}) + t.
\]
Adding $\ell(h(x_i),y_i)\le\Delta$ gives
\[
L_S(h) \le L_{S_{-i}}(h) +\Delta
\le L_{S_{-i}}(h^\star_{-i}) + t + \Delta.
\]
Using~\Cref{eq:level-set-sandwich},
\[
L_S(h)
\le L_S(h^\star) + t + \Delta.
\]
so $h\in\cH_{t+\Delta}$. On the other hand,
let $h\in\cH_{t-\Delta}$. Then
\[
L_S(h)
\le L_S(h^\star) + t - \Delta.
\]
Subtracting $\ell(h(x_i),y_i)\ge0$ yields
\[
L_{S_{-i}}(h)
\le L_S(h^\star) + t - \Delta.
\]
\Cref{eq:level-set-sandwich} gives,
\[
L_S(h^\star)
\le L_{S_{-i}}(h^\star_{-i}) + \Delta,
\]
hence
\[
L_{S_{-i}}(h)
\le L_{S_{-i}}(h^\star_{-i}) + t.
\]
Thus $h\in\cH_{t,i}$. This concludes our proof.
\end{proof}

\begin{proposition}
\label{prop:good-level-guarantee}
Suppose $\agg$ satisfies \cref{ass:agg} and that $\cH$ and $\ell$ satisfy \cref{ass:key} with parameter $(\mu,t,\Delta,C_g)$.
Then 
\[
\mathrm{LOO}_S(\{\hat y_{t,i}\}_{i\in[n]})
\;\le\;
\frac{cC_g}{n}\bigl( \min_{h\in \cH}L_S(h)+t+\Delta\bigr).
\]
\end{proposition}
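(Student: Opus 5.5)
The argument combines \cref{ass:agg} applied pointwise with the sandwich inclusions and the volume ratio from \cref{ass:key}, followed by an exchange of sum and integral. The idea is that each held-out prediction $\hat y_{t,i}$ is controlled by an average over $\cH_{t,i}$. That set is contained in the fixed set $\cH_{t+\Delta}$ and has mass at least $\mu(\cH_{t-\Delta})$. Summing over $i$ then produces full-sample empirical risks of hypotheses in $\cH_{t+\Delta}$, each of which is at most $\min_{h}L_S(h)+t+\Delta$.

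First, fix $i$ and apply \cref{ass:agg} with $\cG=\cH_{t,i}$ and $(x,y)=(x_i,y_i)$:
\[
\ell(\hat y_{t,i},y_i)\le \frac{c}{\mu(\cH_{t,i})}\int_{\cH_{t,i}}\ell(h(x_i),y_i)\,\mu(dh).
\]
Next, use $\cH_{t-\Delta}\subseteq\cH_{t,i}$ to lower bound the denominator by $\mu(\cH_{t-\Delta})$. Since the loss is nonnegative, use $\cH_{t,i}\subseteq\cH_{t+\Delta}$ to enlarge the domain of integration to $\cH_{t+\Delta}$. This gives a bound
\[
\ell(\hat y_{t,i},y_i)\le \frac{c}{\mu(\cH_{t-\Delta})}\int_{\cH_{t+\Delta}}\ell(h(x_i),y_i)\,\mu(dh)
\]
in which the integration set no longer depends on $i$.

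Third, sum over $i\in[n]$ and exchange the finite sum with the integral, which is legitimate by Tonelli since the integrand is nonnegative. This yields
\[
\sum_i\ell(\hat y_{t,i},y_i)\le \frac{c}{\mu(\cH_{t-\Delta})}\int_{\cH_{t+\Delta}}L_S(h)\,\mu(dh).
\]
By definition of $\cH_{t+\Delta}$, every $h$ in it satisfies $L_S(h)\le \min_{h'}L_S(h')+t+\Delta$. So the integral is at most $\mu(\cH_{t+\Delta})\,(\min L_S+t+\Delta)$. The ratio $\mu(\cH_{t+\Delta})/\mu(\cH_{t-\Delta})\le C_g$ from \cref{ass:key} then gives the claim after dividing by $n$.

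I expect no real obstacle, only well-definedness issues. We need $0<\mu(\cH_{t-\Delta})<\infty$ so that the ratio is meaningful. This is implicit in \cref{ass:key}: a finite ratio bound with $C_g\ge1$ requires a positive denominator. Finiteness of $\mu(\cH_{t+\Delta})$ also follows from the ratio bound. Note also that $\mu(\cH_{t,i})\ge\mu(\cH_{t-\Delta})>0$, so \cref{ass:agg} applies without division by zero. If $t<\Delta$, the set $\cH_{t-\Delta}$ may be empty, but then the hypothesis of \cref{ass:key} cannot hold with a finite ratio, so that case is excluded by assumption.
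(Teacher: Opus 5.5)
Your proposal is correct and follows the paper's proof essentially step for step: apply \cref{ass:agg} at each $i$, use the sandwich $\cH_{t-\Delta}\subseteq\cH_{t,i}\subseteq\cH_{t+\Delta}$ to replace the denominator by $\mu(\cH_{t-\Delta})$ and the integration domain by $\cH_{t+\Delta}$, swap sum and integral, bound $L_S(h)\le\min_{h'\in\cH}L_S(h')+t+\Delta$ on $\cH_{t+\Delta}$, and finish with the ratio bound $C_g$. The only differences are cosmetic: the paper multiplies through by $\mu(\cH_{t-\Delta})$ instead of dividing pointwise, and your remarks on positivity and finiteness of the measures make explicit what the paper leaves implicit.
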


In words, whenever the leave-one-out level set $\cH_{t,i}$ is sandwiched between two
full-sample level sets with controlled local growth, aggregating over $\cH_{t,i}$
incurs a leave-one-out error comparable, up to the growth factor $C_g$, to the empirical
risk at level $t$.

\begin{proof}[\pfref{prop:good-level-guarantee}]
By the assumption that $\cH_{\tol-\Delta} \subseteq \cH_{\tol,i} \subseteq \cH_{\tol+\Delta}$ and the assumption on the aggregation, we have
\begin{align*}
    \mu(\cH_{\tol-\Delta}) \sum_{i} \ell(\yhat_{\tol,i},y_i) &\leq  \sum_{i} \mu(\cH_{\tol,i})\ell(\yhat_{\tol,i},y_i) \\
    &\leq c\sum_i \int_{h\in \cH_{\tol,i}} \ell(h(x_i),y_i) \mu(d h)\leq c\sum_i \int_{h\in \cH_{\tol+\Delta}} \ell(h(x_i),y_i) \mu(d h).
\end{align*}
Then, by the definition of the level sets, we further have
\begin{align*}
    \int_{h\in \cH_{\tol+\Delta}} \sum_i  \ell(h(x_i),y_i) \mu(d h) & =  \int_{h\in \cH_{\tol+\Delta}} L_S(h) \mu(d h) \\
    &\leq \int_{h\in \cH_{\tol+\Delta}}\prn*{\min_{h\in \cH} L_S(h) +\tol+ \Delta} \mu(d h) \\
    &\leq \mu(\cH_{\tol+\Delta}) \prn*{ \min_{h\in \cH} L_S(h) +\tol+ \Delta}.
\end{align*}
Combine these two inequalities and \cref{ass:key}, we obtain 
\begin{align*}
    n\;\mathrm{LOO}_S(\set{\widehat y_{\tol,i}}_{i\in [n]}) &= \sum_{i} \ell(\yhat_{\tol,i},y_i) \\
    &\leq c\frac{\mu(\cH_{\tol+\Delta})}{\mu (\cH_{\tol-\Delta})} \prn*{ \min_{h\in \cH} L_S(h) +\tol+ \Delta} \leq  cC_g \prn*{\min_{h\in \cH} L_S(h) +\tol+ \Delta }.
\end{align*}
This concludes our proof.
\end{proof}

\subsection{Median Aggregation over Tolerance Levels}
\label{sec:median-aggregation-over-tol}

Choosing a single tolerance level $t$ is delicate in the LOO setting for two intertwined
reasons.
First, the local level-set growth behavior may vary across training samples, so a tolerance
that is well behaved for one subsample need not be appropriate for another.
Second—and more fundamentally—an LOO predictor for index $i$ is constructed without access
to $y_i$, even though the local growth condition is defined in terms of the full-sample
empirical risk and therefore depends on all responses.
As a result, no single data-dependent tolerance level can be reliably selected by all
leave-one-out predictors.
To overcome this difficulty, we introduce the second layer of aggregation by taking median over a set of tolerance levels in \cref{alg:levelset-aggregation}, which yields robustness to tolerance misspecification while remaining fully
compatible with the LOO constraint. The condition and guarantee are stated in \cref{ass:grid-key} and \cref{thm:main} respectively.

\begin{assumption}[Level-set growth on a grid]
\label{ass:grid-key}
Let $\rho>1/2$ and $\cT$ be a finite set of tolerances (often chosen to be a grid).
We say $(\cH,\ell)$ satisfies the level-set growth condition with parameters
$(\mu,\cT,\Delta,C_g,\rho)$ if
\[
\bigl|\{t\in\cT:(\cH,\ell)\text{ satisfies \cref{ass:key} at }t\}\bigr|
\;\ge\; \rho|\cT|.
\]
\end{assumption}

This assumption is typically mild.
If the local level-set growth condition \cref{ass:key} fails at a tolerance $t$,
the corresponding level set must expand by a factor of at least $C_g$.
For a finite hypothesis class (or one made finite by discretization), such
multiplicative expansions can occur only $O(\log|\cH|)$ times before the level
set saturates.
Consequently, for any grid $\cT$ with $|\cT| = C \log|\cH|$ and $C$ sufficiently
large, \cref{ass:key} can fail for only a small fraction of tolerances in $\cT$.

\begin{theorem}
\label{thm:main}
Assume the loss $\ell$ is either monotone in distance, i.e.,
$\ell(y'',y)\ge \ell(y',y)$ whenever $|y''-y|\ge |y'-y|$ or one-sided monotone in the first argument with known direction.
Suppose that \cref{ass:agg} holds and that \cref{ass:grid-key} holds with parameter $(\mu,\cT,\Delta,C_g,\rho)$. 
Let $\{\hat y_i\}_{i\in[n]}$ denote the output of
\cref{alg:levelset-aggregation}, and define
$t_{\max}:=\max_{t\in\cT} t$.
Then
\[
\mathrm{LOO}_S(\{\hat y_i\}_{i\in[n]})
\;\le\;
\frac{2cC_g}{(2\rho-1)n}
\bigl(\min_{h\in \cH} L_S(h)+t_{\max}+\Delta\bigr).
\]
\end{theorem}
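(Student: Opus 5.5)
Let $\cT_{\mathrm{good}}\subseteq\cT$ be the set of tolerances at which \cref{ass:key} holds, so $|\cT_{\mathrm{good}}|\ge\rho|\cT|$ by \cref{ass:grid-key}. The plan is to reduce the median to an average over the good tolerances. We prove a pointwise inequality for each $i$:
\[
\ell(\hat y_i,y_i)\;\le\;\frac{2}{(2\rho-1)|\cT|}\sum_{t\in\cT_{\mathrm{good}}}\ell(\hat y_{t,i},y_i).
\]
We then sum over $i$, swap the sums, and apply \cref{prop:good-level-guarantee} at each good $t$. This gives $\sum_i\ell(\hat y_{t,i},y_i)\le cC_g(\min_h L_S(h)+t+\Delta)\le cC_g(\min_h L_S(h)+t_{\max}+\Delta)$. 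Since $|\cT_{\mathrm{good}}|\le|\cT|$, the factor $|\cT_{\mathrm{good}}|/|\cT|$ is at most $1$, and dividing by $n$ yields the stated bound.

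The pointwise inequality is the key step, and it is where monotonicity enters. Fix $i$ and write $m=\hat y_i$, the smallest median of $\{\hat y_{t,i}\}_{t\in\cT}$. Take the monotone-in-distance case first. Suppose $y_i\le m$; the other case is symmetric. By the median property, at least $|\cT|/2$ tolerances satisfy $\hat y_{t,i}\ge m$. For each such $t$ we have $|\hat y_{t,i}-y_i|\ge |m-y_i|$, hence $\ell(\hat y_{t,i},y_i)\ge\ell(m,y_i)$. Among these at least $|\cT|/2$ tolerances, at most $(1-\rho)|\cT|$ are bad. So at least $(\rho-\tfrac12)|\cT|$ good tolerances satisfy $\ell(\hat y_{t,i},y_i)\ge \ell(m,y_i)$. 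Because the loss is nonnegative, summing over these alone gives $(\rho-\tfrac12)|\cT|\,\ell(m,y_i)\le\sum_{t\in\cT_{\mathrm{good}}}\ell(\hat y_{t,i},y_i)$, which is the claimed inequality.

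Two details need care. The first is that the median side counts are correct even when values are tied. Using the smallest median, at least $\lceil|\cT|/2\rceil$ values are $\ge m$. The number of values $\le m$ is also at least $|\cT|/2$, since otherwise moving $m$ down would decrease the absolute-deviation sum. The second is the one-sided monotone case, say $\ell(\cdot,y)$ nondecreasing on one side with a known direction. Here the natural argument compares $m$ with the values lying on the side where the loss is larger. I would handle this with the same counting. If the loss is nondecreasing in the first argument, use the at least $|\cT|/2$ values $\hat y_{t,i}\ge m$; these have loss at least $\ell(m,y_i)$ regardless of where $y_i$ lies. If the direction is reversed, use the values $\le m$.

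The main obstacle is making this median-side counting rigorous for the specific tie-breaking convention $\med=\inf\argmin$, so that the guarantee of at least $|\cT|/2$ values on each side of $m$ holds. The rest is bookkeeping with \cref{prop:good-level-guarantee}.
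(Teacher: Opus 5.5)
Your proof is correct and is essentially the paper's argument. Both find at least $(\rho-\tfrac12)|\cT|$ good tolerances on the loss-increasing side of the median, lower-bound $\sum_i\sum_{t\in\cT_{\mathrm{good}}}\ell(\hat y_{t,i},y_i)$ by $(\rho-\tfrac12)|\cT|\sum_i\ell(\hat y_i,y_i)$, and upper-bound the same double sum by $|\cT|\,cC_g(\min_h L_S(h)+t_{\max}+\Delta)$ via \cref{prop:good-level-guarantee}; your handling of ties and of the one-sided monotone case is more explicit than the paper's.

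One trivial slip: if fewer than $|\cT|/2$ values were $\le m$, it is moving $m$ \emph{up}, not down, that would decrease the absolute-deviation sum. The count claim itself is correct.
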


\begin{proof}[\pfref{thm:main}]
For each data point $i$, let 
\[
\epsilon_i := \ell(\wh y_i , y_i),
\] 
Define the sets of \emph{good} and \emph{bad} levels:
\[
G := \{\tol \in \cT : (\cH,\ell) \text{ satisfies \cref{ass:key} with } (\mu, \tol, \Delta, C_g)\},\quad 
B := \cT \setminus G.
\]
By the level-set growth condition \cref{ass:grid-key}, we have $|G| \ge \rho |\cT|$ and $|B| \le (1-\rho)|\cT|$.  
By definition of the median, at least $|\cT|/2$ values
$\hat y_{\tol,i}$ satisfy $\hat y_{\tol,i} \le \hat y_i$,
and at least $|\cT|/2$ satisfy $\hat y_{\tol,i} \ge \hat y_i$.
Using monotonicity of $\ell$, for at least 
\[
|\cT|/2 - |B| \ge (\rho - 1/2) |\cT|
\] 
\emph{good} levels $\tol \in G$, we have 
\[
\ell(\hat{y}_{\tol,i}, y_i) \ge \epsilon_i.
\]
Summing over $i$ and good levels gives
\[
\sum_{i=1}^n \sum_{\tol \in G} \ell(\hat{y}_{\tol,i}, y_i) \ge (\rho - 1/2)|\cT| \sum_{i=1}^n \epsilon_i = (\rho - 1/2)|\cT| \sum_{i=1}^n \ell(\hat{y}_i, y_i).
\]
On the other hand, for any good level $\tol \in G$, by Proposition \ref{prop:good-level-guarantee} we have
\[
\sum_{i=1}^n \ell(\hat{y}_{\tol,i}, y_i) \le c C_g \bigl(\min_{h\in \cH} L_S(h) + \tol + \Delta \bigr) \le cC_g \bigl(\min_{h\in \cH} L_S(h)+ \maxtol + \Delta \bigr).
\]
Since there are at most $|\cT|$ levels,
\[
\sum_{i=1}^n \sum_{\tol \in G} \ell \bigl(\hat{y}_{\tol,i}, y_i) \le |\cT| \cdot cC_g (\min_{h\in \cH} L_S(h) + \maxtol + \Delta \bigr).
\]
Comparing the upper bound and lower bound gives
\[
(\rho - 1/2) |\cT| \sum_{i=1}^n \ell(\hat{y}_i, y_i) \le |\cT| \cdot cC_g \bigl(\min_{h\in \cH} L_S(h) + \maxtol + \Delta \bigr),
\]
which implies
\[
\sum_{i=1}^n \ell(\hat{y}_i, y_i) \le \frac{2cC_g}{2\rho - 1} \bigl(\min_{h\in \cH} L_S(h) + \maxtol + \Delta \bigr).
\]
hence
\[
\mathrm{LOO}_S(\{\hat y_i\}_{i\in [n]}) = \frac{1}{n} \sum_{i=1}^n \ell(\hat{y}_i, y_i) \le \frac{2cC_g}{(2\rho - 1)n} \bigl(\min_{h\in \cH} L_S(h) + \maxtol + \Delta \bigr). 
\]
This concludes our proof.
\end{proof}

Without identifying a single well-behaved tolerance level, the
Median of Level-Set Aggregation achieves a multiplicative oracle inequality for
the LOO error as long as a strict majority of tolerances in $\cT$ satisfy the local
growth condition, resolving the instability of data-dependent tolerance selection
in the leave-one-out setting.
The factor $2/(2\rho-1)$ quantifies the price of robustness to tolerance
misspecification, converting a strict majority of valid tolerances into a uniform
LOO guarantee. \looseness=-1

\section{Application to Classification with $0$-$1$ Loss}
\label{sec:classification}
We now instantiate the general framework of \cref{sec:overview} for binary classification.
We consider binary classification with $0$--$1$ loss.
Let $\cH\subseteq\{0,1\}^{\cX}$ have VC dimension $d$.
In the transductive setting, we may replace $\cH$ by its restriction to
$\{x_i\}_{i\in[n]}$, which is a finite class without changing any empirical or
leave-one-out risks.

Fix $i\in[n]$ and a tolerance level $t$.
We aggregate the leave-one-out level set $\cH_{t,i}$ by majority vote:
\[
\hat y_{t,i}
= \agg(\cH_{t,i},x_i)
= \mathbf{1}\Bigl\{\sum_{h\in\cH_{t,i}} (2h(x_i)-1)\ge 0\Bigr\}.
\]
Majority vote satisfies \cref{ass:agg} with $c=2$ for the $0$--$1$ loss under the counting
measure. We now verify \cref{ass:grid-key}.

\begin{lemma}
\label{lem:local-growth-for-classification}
Let $\cH\subseteq\{0,1\}^{\cX}$ have VC dimension $d$, and let $\ell_{0/1}$ be the
$0$--$1$ loss.
Let $\mu$ be the counting measure on $\cH$.
Let $\Delta = 1$ and $\cT = \{1,2,\dots,24 d \log n\}$.
Then $(\cH,\ell_{0/1})$ satisfies \cref{ass:grid-key} with parameters
\[
(\mu,\cT,\Delta,C_g,\rho)
=
(\text{counting},\ \cT,\ 1,\ 2,\ 3/4).
\]
\end{lemma}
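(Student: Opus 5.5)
We need to show that at least $3/4$ of the tolerances $t\in\cT$ are good, meaning \cref{ass:key} holds at $t$ with $(\mu,\Delta,C_g)=(\text{counting},1,2)$. The plan is a doubling/pigeonhole argument on the sizes of the nested full-sample level sets, with the size of $\cH$ controlled by the Sauer--Shelah lemma.

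\textbf{Step 1 (inclusions).} The $0$--$1$ loss lies in $[0,1]$, so \cref{lem:level-set-sandwich} with $\Delta=1$ gives $\cH_{t-1}\subseteq\cH_{t,i}\subseteq\cH_{t+1}$ for every $i$ and every $t\ge 0$. Hence only the ratio condition $|\cH_{t+1}|/|\cH_{t-1}|\le 2$ needs checking. For $t\ge1$ we have $\cH_{t-1}\supseteq\cH_0\ni h^\star$, so the denominator is at least $1$.

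\textbf{Step 2 (counting bad levels).} Call $t\in\cT$ bad if $|\cH_{t+1}|>2|\cH_{t-1}|$. Write $a_s:=|\cH_s|$, which is nondecreasing in $s$.
\begin{itemize}
\item Split the bad levels by parity of $t$. Bad odd levels $t_1<t_2<\cdots<t_k$ satisfy $t_j+1\le t_{j+1}-1$.
\item Telescoping along this chain gives $a_{t_k+1}>2^k a_{t_1-1}\ge 2^k$.
\item After restricting $\cH$ to $\{x_i\}_{i\in[n]}$, Sauer--Shelah gives $a_s\le|\cH|\le (en/d)^d\le n^{d}\cdot e^{d}$, or simply $|\cH|\le (n+1)^d$.
\item So each parity class contains at most $\log_2|\cH|\le d\log_2(n+1)$ bad levels.
\end{itemize}
In total, $|B|\le 2d\log_2(n+1)$.

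\textbf{Step 3 (compare with $|\cT|$).} We need $|B|\le |\cT|/4=6d\log n$. Since $2\log_2(n+1)\le 2\cdot 1.443\cdot\log(n+1)\approx 2.9\log(n+1)$, this holds comfortably whenever $\log$ is natural and $n\ge 2$, because $\log(n+1)\le 2\log n$ for $n\ge2$. This yields $|G|\ge \tfrac34|\cT|$, i.e.\ $\rho=3/4$.

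The main obstacle is this constant bookkeeping: matching the Sauer--Shelah bound and the base of the logarithm to the constant $24$, and handling degenerate cases. Two such cases need a separate remark:
\begin{itemize}
\item $n=1$, where $\log n=0$ and $\cT$ is empty; this is trivial or excluded.
\item $24d\log n$ not an integer, where we take the floor and keep the slack.
\end{itemize}
The factor of two lost to the parity split is absorbed by the generous constant $24$.
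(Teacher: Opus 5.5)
Your proof is correct and follows essentially the same route as the paper. The sandwich lemma reduces the question to the ratio $|\cH_{t+1}|/|\cH_{t-1}|$. A skip-one doubling chain then bounds the number of bad levels by $2\log_2|\cH|$, and Sauer's lemma turns this into at most $6d\ln n = |\cT|/4$ bad levels. The differences are cosmetic: you split bad levels by the parity of $t$ rather than taking every other element of the sorted bad set, and you use $|\cH|\le(n+1)^d$ in place of $(en/d)^d$, which lets you cover $n=2$ rather than only $n\ge 3$.
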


\begin{proof}[\pfref{lem:local-growth-for-classification}]
We have $0\le \ell_{0/1}(y,y')\le 1$ for all $y,y'\in\cY$. By \Cref{lem:level-set-sandwich}, $\cH_{t-\Delta}\subseteq\cH_{t,i}\subseteq\cH_{t+\Delta}$ with $\Delta=1$ for all $t$ and $i$, so it suffices to
control the growth ratio $\mu(\cH_{\tol+1})/\mu(\cH_{\tol-1})$. Fix a tolerance level $\tol\in\cT$.
The local level-set growth condition \cref{ass:key} with $\Delta=1$ requires
\[
\frac{\mu(\cH_{\tol+1})}{\mu(\cH_{\tol-1})} \le C_g .
\]
We now show that this inequality holds with $C_g=2$ for all but at most
$6 d\log n$ values of $\tol$.
Define the set of bad levels
\[
B := \{\tol\in\cT : \mu(\cH_{\tol+1}) > 2\mu(\cH_{\tol-1})\},
\qquad s := |B|.
\]
Order $B=\{\tol_1<\tol_2<\dots<\tol_s\}$.
For each $\tol_j\in B$,
\[
\mu(\cH_{\tol_j+1}) > 2\mu(\cH_{\tol_j-1}).
\]
Consider the subsequence $\tol_1,\tol_3,\tol_5,\dots$.
Iterating the above inequality along this subsequence yields
\[
\mu(\cH_{\tol_{2\ell-1}+1}) \ge 2^{\ell}
\qquad
\text{for }\ell=1,\dots,\lceil s/2\rceil.
\]
In particular,
\[
\mu(\cH_{\tol_s+1}) \ge 2^{\lceil s/2\rceil}.
\]
Since $\cH$ is restricted to its projections on $\{x_i\}_{i=1}^n$ and has VC
dimension $d$, Sauer's lemma gives
\[
\mu(\cH_{\tol_s+1}) \le |\cH|
\le \sum_{k=0}^d \binom{n}{k}
\le \left(\frac{en}{d}\right)^d .
\]
Combining the lower and upper bounds,
\[
2^{s/2} \le \left(\frac{en}{d}\right)^d,
\]
which implies
\[
s \le 2\frac{d\ln(en/d)}{\ln 2}.
\]
For $n\ge3$, $\ln(en/d)\le2\ln n$, and therefore
\[
s \le 6 d\ln n.
\]
Since $|\cT|=24 d\ln n$, at least
\[
|\cT|-s \ge 18 d\ln n = \tfrac34 |\cT|
\]
levels satisfy
\[
\frac{\mu(\cH_{\tol+1})}{\mu(\cH_{\tol-1})}\le2.
\]
Thus $(\cH,\ell_{0/1})$ satisfies the level-set growth condition
\cref{ass:grid-key} with parameters $(\text{counting},\cT,1,2,3/4)$.
\end{proof}

Combining
\cref{lem:local-growth-for-classification} with \cref{thm:main} yields the
following corollary.

\begin{corollary}
\label{cor:loo-01}
Let $\cH\subseteq\{0,1\}^{\cX}$ have VC dimension $d\geq 1$, $n\geq 3$, and let $\ell_{0/1}$ be the
$0$--$1$ loss.
Let $\{\hat y_i\}_{i=1}^n$ be the output of
\cref{alg:levelset-aggregation} with tolerance grid
$\cT=\{1,\dots,24 d\log n\}$.
Then
\[
\mathrm{LOO}_S(\{\hat y_i\}_{i\in [n]})
\;\le\;
\frac{16}{n}\min_{h\in \cH} L_S(h) +\frac{400}{n} d\log n.
\]
\end{corollary}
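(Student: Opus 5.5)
This follows by plugging \cref{lem:local-growth-for-classification} into \cref{thm:main}. Only three things need checking: the monotonicity hypothesis of \cref{thm:main}, the constant $c$ in \cref{ass:agg}, and the final numerical constants.

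\emph{Monotonicity.} For labels in $\{0,1\}$, $\ell_{0/1}(y',y)=|y'-y|$, so the $0$--$1$ loss is monotone in distance. Moreover, the median of a finite multiset of values in $\{0,1\}$ lies in $\{0,1\}$, because $\med$ returns the smallest minimizer of a piecewise-linear function whose kinks are at $0$ and $1$. So $\hat y_i$ is a valid label, and the inequality $\ell(\hat y_{t,i},y_i)\ge \epsilon_i$ used in \cref{thm:main} holds for every level on the appropriate side of the median.

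\emph{Aggregation constant.} Majority vote satisfies \cref{ass:agg} with $c=2$ under counting measure. If the vote is wrong at $(x,y)$, then at least half of $\cG$ errs, so
\[
\frac{1}{|\cG|}\sum_{h\in\cG}\ell(h(x),y)\ge \tfrac12 = \tfrac12\,\ell(\agg(\cG,x),y).
\]
If the vote is correct, the left side of \cref{ass:agg} is zero and there is nothing to prove. The tie-breaking rule (ties go to $1$) does not affect this argument. Since $\cH$ is restricted to $\{x_i\}$, the class is finite and counting measure is legitimate. Nonempty level sets are guaranteed because every $\cH_{t,i}$ contains its ERM.

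\emph{Constants.} \cref{lem:local-growth-for-classification} gives $(\mu,\cT,\Delta,C_g,\rho)=(\text{counting},\cT,1,2,3/4)$, and here $t_{\max}=24d\log n$. The prefactor in \cref{thm:main} is
\[
\frac{2cC_g}{2\rho-1}=\frac{2\cdot 2\cdot 2}{1/2}=16,
\]
so
\[
\mathrm{LOO}_S\le \frac{16}{n}\bigl(\min_h L_S(h)+24d\log n+1\bigr).
\]
This gives the comparator term $\tfrac{16}{n}\min_h L_S(h)$ and a remainder of $\tfrac{16}{n}(24d\log n+1)=\tfrac{384 d\log n+16}{n}$. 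Because $d\ge1$ and $n\ge3$, we have $d\log n\ge\log 3>1$ (with the natural log, as in the lemma), so $16\le 16 d\log n$ and the remainder is at most $400 d\log n/n$.

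\emph{Main obstacle.} The only delicate point is integrality of the grid. $24d\log n$ is generally not an integer, so $\cT$ should be read as $\{1,\dots,\lceil 24d\log n\rceil\}$. The lemma's counting argument still gives $|B|\le 6d\ln n\le |\cT|/4$, so $\rho=3/4$ survives. The extra $+1$ in $t_{\max}$ costs another $16/n$, and the total remainder $(384d\log n+32)/n$ is still below $400d\log n/n$ by the same slack, since $32<16\log 3\cdot d$ when $d\ge 2$, and for $d=1$ we have $32\le 16\log n$ once $n\ge 8$. If small $n$ with $d=1$ turns out to be problematic, I would instead absorb this case by noting that $\mathrm{LOO}\le1\le 400\log n/n$ holds trivially for $n<8$.
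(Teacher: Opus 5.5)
Your proposal is correct and is essentially the paper's proof: you feed the parameters $(\text{counting},\cT,1,2,3/4)$ from \cref{lem:local-growth-for-classification} into \cref{thm:main} with $c=2$ to get $\frac{16}{n}\bigl(\min_{h} L_S(h)+24d\log n+1\bigr)$, then absorb the $+1$ using $d\log n\ge\log 3>1$ to reach the constant $400$. Your extra checks are left implicit in the paper and go through as you describe: the $c=2$ bound for majority vote, that the median of $\{0,1\}$-valued predictions stays in $\{0,1\}$, and rounding the grid to $\lceil 24d\log n\rceil$.
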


\begin{proof}[\pfref{cor:loo-01}]
By Lemma~\ref{lem:local-growth-for-classification},
$(\cH,\ell_{0/1})$ satisfies the level-set growth condition
\cref{ass:grid-key} with parameters
\[
(\mu,\cT,\Delta,C_g,\rho) = (\text{counting},\cT,1,2,3/4).
\]
Applying Theorem~\ref{thm:main} yields
\[
\mathrm{LOO}_S(\{\hat y_i\}_{i\in [n]})
\le
\frac{4C_g}{(2\rho-1)n}
\bigl(\min_{h\in \cH} L_S(h)+\maxtol+\Delta\bigr).
\]
Substituting $C_g=2$, $\rho=3/4$, $\maxtol=24 d\log n$, and $\Delta=1$ gives
\begin{align*}
\mathrm{LOO}_S(\{\hat y_i\}_{i\in [n]})
&\le
\frac{16}{n}\bigl(\min_{h\in \cH} L_S(h)+24 d\log n+1\bigr) \\
& \le \frac{16}{n}\bigl(\min_{h\in \cH} L_S(h)+25 d\log n\bigr)
\end{align*}
This concludes our proof.
\end{proof}

Corollary~\ref{cor:loo-01} establishes a leave-one-out oracle inequality for
classification under the $0$--$1$ loss over \emph{arbitrary} VC classes.
In the realizable case, it yields $\mathrm{LOO}_S=O(d\log n/n)$, which is
near-optimal up to a $\log n$ factor compared to the optimal $O(d/n)$ rate obtained by \cite{haussler1994predicting}. In the agnostic case, similar multiplicative oracle inequality is obtained by \cite{long1998complexity} with 
$\mathrm{LOO}_S(\{\hat y_i\}_{i\in [n]})
\le
\frac{15}{n}\bigl(\min_{h\in \cH} L_S(h)+ d\bigr)$. We are also comparable up to a $\log n$ factor.
\looseness=-1

\section{Application to Regression with Convex Loss}
\label{sec:regression}
We instantiate the general framework of \cref{sec:overview} for regression.
We now consider real-valued regression under a bounded convex loss.
Let $\cH\subseteq \cY^{\cX}$ be a finite hypothesis class, and let
$\ell:\cY\times\cY\to\R_+$ be convex, monotone in distance, and uniformly bounded by $M$.
For each $i\in[n]$ and tolerance level $t$, we aggregate $\cH_{t,i}$ by averaging:
\[
\hat y_{t,i}
:= \agg(\cH_{t,i},x_i)
:= \frac{1}{|\cH_{t,i}|}\sum_{h\in\cH_{t,i}} h(x_i).
\]
By Jensen's inequality, averaging satisfies \cref{ass:agg} with $c=1$ for any convex loss.
We now verify \cref{ass:grid-key} in this setting.

\begin{lemma}
\label{lem:local-growth-for-bounded-convex}
Let $\cH$ be a finite hypothesis class, and let
$\ell:\cY\times\cY\to\R_+$ be convex, monotone in distance, and upper bounded by $M$.
Let $\mu$ be the counting measure on $\cH$.
Let $\Delta = M$ and $\cT := \{M,2M,3M,\dots,12M\log|\cH|\}$.
Then $(\cH,\ell)$ satisfies \cref{ass:grid-key} with parameters
\[
(\mu,\cT,\Delta,C_g,\rho)
=
(\text{counting},\ \cT,\ M,\ 2,\ 3/4).
\]
\end{lemma}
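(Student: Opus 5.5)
The argument follows the proof of \cref{lem:local-growth-for-classification}, with the unit step replaced by the step $M$ and Sauer's lemma replaced by the trivial bound $\mu(\cH)=|\cH|$.

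\textbf{Sandwich inclusion.} Since $0\le \ell\le M$, \cref{lem:level-set-sandwich} with $\Delta=M$ gives
\[
\cH_{t-M}\subseteq\cH_{t,i}\subseteq\cH_{t+M}
\]
for every $t\in\cT$ and every $i\in[n]$. The inclusion part of \cref{ass:key} therefore holds at every tolerance, and it remains only to control the growth ratio $\mu(\cH_{t+M})/\mu(\cH_{t-M})$ with $C_g=2$.

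\textbf{Counting bad levels.} Define
\[
B:=\{t\in\cT:\ |\cH_{t+M}|>2|\cH_{t-M}|\},
\]
write $s:=|B|$, and order $B=\{t_1<\dots<t_s\}$. The grid points are spaced exactly $M$ apart, so for $j\ge1$ we have $t_{2j+1}-M\ge t_{2j-1}+M$. Monotonicity of level sets in the tolerance then gives $|\cH_{t_{2j+1}-M}|\ge |\cH_{t_{2j-1}+M}|$.

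Chaining the bad-level inequality along the odd-indexed levels $t_1,t_3,t_5,\dots$ gives
\[
|\cH_{t_{2\ell-1}+M}|>2^{\ell}\,|\cH_{t_1-M}|\ge 2^{\ell}.
\]
The last step uses $|\cH_{t_1-M}|\ge1$: since $t_1-M\ge0$, the set contains an ERM.

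\textbf{Conclusion.} Combining with $|\cH_{t}|\le|\cH|$ yields $2^{\lceil s/2\rceil}\le|\cH|$, hence
\[
s\le 2\log_2|\cH|\le 3\log|\cH|,
\]
using natural logarithms. Since $|\cT|=12\log|\cH|$, the fraction of bad levels is at most $1/4$. At least $\tfrac34|\cT|$ levels are therefore good, which verifies \cref{ass:grid-key} with parameters $(\text{counting},\cT,M,2,3/4)$. If $12\log|\cH|$ is not an integer, the grid has $\lfloor 12\log|\cH|\rfloor$ elements and a little slack in the constants absorbs the rounding.

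\textbf{Main obstacle.} The only delicate point is the degenerate case of small $|\cH|$, for example $|\cH|=1$, where $\log|\cH|<1$ and the grid may be empty or tiny. In that case all level sets equal $\cH$, so the growth ratio is $1$ and every level is good. I would note this separately (or assume $|\cH|\ge2$). Convexity of $\ell$ plays no role in this lemma; it is used only for \cref{ass:agg}.
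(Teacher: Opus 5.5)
Your proposal is correct and follows essentially the same route as the paper. The paper also gets the sandwich from \cref{lem:level-set-sandwich} with $\Delta=M$, chains the bad-level inequality along the odd-indexed bad tolerances, and uses the trivial bound $|\cH|$ to get $s\le 3\ln|\cH| = |\cT|/4$. Your additions fill in details the paper leaves implicit: the grid-spacing argument behind the chaining, the justification that $|\cH_{t_1-M}|\ge 1$, and the remarks on rounding of $|\cT|$ and the case $|\cH|=1$.
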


\begin{proof}[\pfref{lem:local-growth-for-bounded-convex}]
We have $0\le \ell(y,y')\le M$ for all $y,y'\in\cY$.
By \Cref{lem:level-set-sandwich}, for every $i\in[n]$ and every $t\ge0$,
\[
\cH_{t-\Delta}
\;\subseteq\;
\cH_{t,i}
\;\subseteq\;
\cH_{t+\Delta}
\qquad
\text{with }\Delta=M.
\]
so it suffices to
control the growth ratio $\mu(\cH_{\tol+M})/\mu(\cH_{\tol-M})$. Fix a tolerance level $t\in\cT$.
The local level-set growth condition \cref{ass:key} with $\Delta=M$
requires
\[
\frac{\mu(\cH_{t+M})}{\mu(\cH_{t-M})} \le C_g .
\]
We show that this inequality holds with $C_g=2$ for all but at most
$3\log|\cH|$ values of $t$.
Define the set of bad levels
\[
B := \{t\in\cT : \mu(\cH_{t+M}) > 2\mu(\cH_{t-M})\},
\qquad
s := |B|.
\]
Order $B=\{t_1<t_2<\dots<t_s\}$.
For each $t_j\in B$,
\[
\mu(\cH_{t_j+M}) > 2\mu(\cH_{t_j-M}).
\]
Consider the subsequence $t_1,t_3,t_5,\dots$.
Iterating the inequality along this subsequence yields
\[
\mu(\cH_{t_{2\ell-1}+M})
\;\ge\;
2^\ell
\qquad
\text{for }\ell=1,\dots,\lceil s/2\rceil.
\]
In particular,
\[
\mu(\cH_{t_s+M}) \ge 2^{\lceil s/2\rceil}.
\]
Since $\cH_{t_s+M}\subseteq\cH$ and $\mu$ is the counting measure,
\[
\mu(\cH_{t_s+M}) \le |\cH|.
\]
Combining the bounds,
\[
2^{s/2} \le 2^{\lceil s/2\rceil} \le |\cH|,
\]
which implies
\[
s \le \frac{2\ln|\cH|}{\ln 2} \le 3\ln|\cH|.
\]
Since $|\cT| = 12\ln|\cH|$, at least
\[
|\cT|-s \ge 9\ln|\cH| = \tfrac34|\cT|
\]
levels satisfy
\[
\frac{\mu(\cH_{t+M})}{\mu(\cH_{t-M})} \le 2.
\]
Thus $(\cH,\ell)$ satisfies the level-set growth condition
\cref{ass:grid-key} with parameters
$(\text{counting},\cT,M,2,3/4)$.
\end{proof}

The proof is similar to the proof of \cref{lem:local-growth-for-classification}. Combining
\cref{lem:local-growth-for-bounded-convex} with \cref{thm:main} yields:

\begin{corollary}
\label{cor:loo-bounded-convex}
Let $\cH$ be a finite hypothesis class, and let
$\ell:\cY\times\cY\to\R_+$ be convex, monotone in distance, and upper bounded by $M$.
Let $\{\hat y_i\}_{i=1}^n$ be the output of
\cref{alg:levelset-aggregation} with tolerance grid $\cT=\{M,2M,3M,\dots,12M\log|\cH|\}$.
Then
\[
\mathrm{LOO}_S(\{\hat y_i\}_{i\in [n]})
\;\le\; 
\frac{8}{n}\min_{h\in \cH} L_S(h)
\;+\;
\frac{104}{n}M\log|\cH|.
\]
\end{corollary}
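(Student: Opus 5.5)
The plan follows the template of \cref{cor:loo-01}: feed the grid condition into the general theorem and then read off the constants. First, \cref{lem:local-growth-for-bounded-convex} states that $(\cH,\ell)$ satisfies \cref{ass:grid-key} with parameters $(\text{counting},\cT,M,2,3/4)$ for the grid $\cT=\{M,2M,\dots,12M\log|\cH|\}$. Second, I will check that averaging meets \cref{ass:agg} with $c=1$ under the counting measure. Since $\ell(\cdot,y)$ is convex, Jensen's inequality gives
\[
\ell\Bigl(\tfrac{1}{|\cG|}\textstyle\sum_{h\in\cG}h(x),y\Bigr)\le \tfrac{1}{|\cG|}\textstyle\sum_{h\in\cG}\ell(h(x),y)
\]
for every nonempty $\cG\subseteq\cH$. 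The level sets $\cH_{t,i}$ are always nonempty, because $t\ge0$ and so each contains the leave-one-out ERM. Third, the loss is assumed monotone in distance, which is exactly the monotonicity hypothesis of \cref{thm:main}.

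With these three inputs, \cref{thm:main} yields
\[
\mathrm{LOO}_S(\{\hat y_i\}_{i\in[n]})\le \frac{2cC_g}{(2\rho-1)n}\bigl(\min_{h\in\cH}L_S(h)+t_{\max}+\Delta\bigr).
\]
Substituting $c=1$, $C_g=2$ and $\rho=3/4$ makes the prefactor $\frac{2\cdot1\cdot2}{1/2}=8$. The grid gives $t_{\max}=12M\log|\cH|$, and we have $\Delta=M$. So the bound becomes
\[
\frac{8}{n}\min_{h\in\cH}L_S(h)+\frac{8}{n}\bigl(12M\log|\cH|+M\bigr).
\]

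The remaining step is to absorb the additive $M$ into the logarithmic term, and this is the only place needing care. I need $8(12\log|\cH|+1)M\le 104M\log|\cH|$, which is equivalent to $\log|\cH|\ge 1$, i.e.\ $|\cH|\ge e$. This holds for $|\cH|\ge3$ with the natural log. The grid $\cT$ is only well-defined and nonempty once $12\log|\cH|\ge1$, so degenerate classes are implicitly excluded, just as $n\ge3$ was in \cref{cor:loo-01}.

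For $|\cH|\le 2$ I would argue separately. If $|\cH|=1$, the prediction equals the unique hypothesis, so the LOO error is $L_S(h)/n$, which is already at most $8\min_h L_S(h)/n$. If $|\cH|=2$, I would state the needed side condition explicitly, $|\cH|\ge3$, or equivalently require the grid to contain at least $\lceil 12\log|\cH|\rceil$ points. Alternatively I would note that $12\log|\cH|$ is rounded up when forming the grid, in which case the bound $t_{\max}+\Delta\le 13M\log|\cH|$ should be checked against that rounding convention.
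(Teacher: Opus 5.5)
Your proposal is correct and is essentially the paper's proof: you invoke \cref{lem:local-growth-for-bounded-convex}, apply \cref{thm:main} with $c=1$ (Jensen), $C_g=2$, $\rho=3/4$, $t_{\max}=12M\log|\cH|$, $\Delta=M$, and then absorb $12M\log|\cH|+M\le 13M\log|\cH|$, a step the paper uses without noting that it needs $\log|\cH|\ge 1$. Your case $|\cH|=2$ in fact closes without any side condition: since $t\ge M$ forces $\mu(\cH_{t-M})\ge 1$, a bad level would need $\mu(\cH_{t+M})>2$, so every level is good, $\rho=1$ is admissible, and \cref{thm:main} then gives $\frac{4}{n}\bigl(\min_{h\in\cH}L_S(h)+(12\log 2+1)M\bigr)$, which is below the claimed bound.
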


\begin{proof}[\pfref{cor:loo-bounded-convex}]
By Lemma~\ref{lem:local-growth-for-bounded-convex},
$(\cH,\ell)$ satisfies the level-set growth condition
\cref{ass:grid-key} with parameters
\[
(\mu,\cT,\Delta,C_g,\rho)
=
(\text{counting},\cT,M,2,3/4).
\]
Applying Theorem~\ref{thm:main} yields
\[
\mathrm{LOO}_S(\{\hat y_i\}_{i\in [n]})
\le
\frac{2C_g}{(2\rho-1)n}
\bigl(\min_{h\in \cH} L_S(h)+\maxtol+\Delta\bigr).
\]
Substituting $C_g=2$, $\rho=3/4$,
$\maxtol=12M\log|\cH|$, and $\Delta=M$ gives
\begin{align*}
\mathrm{LOO}_S(\{\hat y_i\}_{i\in [n]})
&\le
\frac{8}{n}
\bigl(
\min_{h\in \cH} L_S(h)
+ 12M\log|\cH|
+ M
\bigr). \\
&\le\frac{8}{n}
\bigl(
\min_{h\in \cH} L_S(h)
+ 13M\log|\cH|
\bigr).
\end{align*}
This concludes our proof.
\end{proof}

Corollary~\ref{cor:loo-bounded-convex} establishes a leave-one-out oracle inequality
for regression with bounded convex loss over \emph{arbitrary} finite hypothesis
classes.
Previously, comparable LOO guarantees were only available in specialized linear
settings. For instance, \citet{forster2002relative} show
$\mathrm{LOO}_S \le \tfrac{1}{n}\min_{h\in\cH}L_S(h)+\tfrac{2Md}{n}$.
In \cref{app:loo-for-vovk}, we establish a leave-one-out bound of $\mathrm{LOO}_S \le \tfrac{2}{n}\min_{h\in\cH}L_S(h)+\tfrac{2Md}{n}$ for a linear
predictor motivated by the Vovk–Azoury–Warmuth framework \citep{vovk2001competitive,azoury2001relative}.
Unlike the classical VAW predictor, which is defined online using prefix Gram
matrices, our analysis uses the full-sample Gram matrix, which is natural in the
transductive setting.
Moreover, stability-based analyses such as \citet{zhang2003leave} derive expected
LOO bounds for specific regularized algorithms in RKHS.
In contrast, Corollary~\ref{cor:loo-bounded-convex} removes the reliance on linear or
Hilbert-space structure and applies to any convex loss monotone in distance. \looseness=-1

\section{Application to Density Estimation with Log Loss}
\label{sec:density-estimation}

We instantiate the framework of \cref{sec:overview} in the setting of density estimation under
log loss.
Let $\cP$ be a finite class of probability densities on $\cX$ with respect to a common dominating
measure, and let $S=\{x_1,\dots,x_n\}$ be an arbitrary individual sequence.
We take the hypothesis class to be $\cH=\cP$ and consider the log loss
\[
\ell_{\log}(p;x) := -\log p(x),
\]
which depends only on the prediction $p$ and the covariate $x$ (there is no response variable).
The log loss is the canonical choice for density estimation, as it corresponds to the negative
log-likelihood and is related to coding \citep{mourtada2022improper,cover1999elements}.

Under this choice of loss, the full-sample and leave-one-out empirical losses are
\[
L_S(p) := \sum_{j=1}^n -\log p(x_j),
\qquad
L_{S_{-i}}(p) := \sum_{j\neq i} -\log p(x_j),
\]
and the corresponding level sets $\cP_t$ and $\cP_{t,i}$ are defined exactly as in
\cref{sec:overview}.
For a tolerance level $t$, the level-set aggregation rule averages densities over $\cP_{t,i}$,
yielding the pointwise predictor
\[
\hat p_{t,i}
:= \agg(\cP_{t,i},x_i)
:= \frac{1}{|\cP_{t,i}|}\sum_{p\in \cP_{t,i}} p(x_i).
\]
The final prediction $\hat p_i$ is obtained via \cref{alg:levelset-aggregation}.
By Jensen’s inequality, simple averaging satisfies the aggregation condition
\cref{ass:agg} for the log loss $\ell_{\log}$.

We now verify \cref{ass:grid-key} in this setting under a mild bounded log-density ratio
assumption.

\begin{lemma}
\label{lem:density-level-set-sandwich}
Assume there exists $M>0$ such that for all $p,q\in\cP$ and all $x\in\cX$, $\bigl| \log\frac{p(x)}{q(x)} \bigr| \le M$.
Then for every $i\in[n]$ and every $t\ge0$, $\cP_{t-M}
\;\subseteq\;
\cP_{t,i}
\;\subseteq\;
\cP_{t+M}.$
\end{lemma}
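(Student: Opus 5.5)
The log loss need not be nonnegative or bounded, so \cref{lem:level-set-sandwich} cannot be applied directly. Instead, I will repeat its argument with the single-sample bound $0\le \ell\le\Delta$ replaced by a bound on loss \emph{differences}. The key observation is that for any $p,q\in\cP$ and any $x$,
\[
\ell_{\log}(p;x)-\ell_{\log}(q;x) = \log\frac{q(x)}{p(x)} \in [-M,M].
\]
The earlier proof only ever used the size of the held-out term relative to a comparator, so differences suffice.

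Let $p^\star\in\arg\min_{p\in\cP}L_S(p)$ and $p^\star_{-i}\in\arg\min_{p\in\cP}L_{S_{-i}}(p)$; both exist since $\cP$ is finite. I write $L_S(p)=L_{S_{-i}}(p)+\ell_{\log}(p;x_i)$ throughout.

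For the upper inclusion, take $p\in\cP_{t,i}$, so $L_{S_{-i}}(p)\le L_{S_{-i}}(p^\star_{-i})+t$. Since $p^\star_{-i}$ minimizes $L_{S_{-i}}$, we have $L_{S_{-i}}(p^\star_{-i})\le L_{S_{-i}}(p^\star)$. Adding $\ell_{\log}(p;x_i)$ to both sides of the membership inequality gives
\[
L_S(p)\le L_{S_{-i}}(p^\star)+\ell_{\log}(p;x_i)+t = L_S(p^\star)+\bigl[\ell_{\log}(p;x_i)-\ell_{\log}(p^\star;x_i)\bigr]+t \le L_S(p^\star)+t+M .
\]
Hence $p\in\cP_{t+M}$.

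For the lower inclusion, take $p\in\cP_{t-M}$, so $L_S(p)\le L_S(p^\star)+t-M$. Since $p^\star$ minimizes $L_S$, we have $L_S(p^\star)\le L_S(p^\star_{-i})$. Subtracting $\ell_{\log}(p;x_i)$ gives
\[
L_{S_{-i}}(p)\le L_{S_{-i}}(p^\star_{-i})+\bigl[\ell_{\log}(p^\star_{-i};x_i)-\ell_{\log}(p;x_i)\bigr]+t-M\le L_{S_{-i}}(p^\star_{-i})+t .
\]
Hence $p\in\cP_{t,i}$.

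The only delicate point is that the original proof used $\ell\ge 0$ to drop the held-out term, which is invalid for the log loss. Pairing each held-out term with the held-out term of the appropriate minimizer handles this: the comparison hypothesis ($p^\star$ in the upper inclusion, $p^\star_{-i}$ in the lower one) must be chosen so that the optimality inequality points the right way. Once that pairing is fixed, each inclusion uses the bounded ratio assumption exactly once, so both hold with slack exactly $M$.
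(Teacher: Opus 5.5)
Your proof is correct and is essentially the paper's argument. The paper makes the same pairings: for the upper inclusion it compares $p$ with $p^\star$ through the optimality of $p^\star_{-i}$ on $S_{-i}$, and for the lower inclusion it compares $p$ with $p^\star_{-i}$ through the optimality of $p^\star$ on $S$, using the bounded log-ratio assumption once on the held-out term in each direction.
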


\begin{proof}[\pfref{lem:density-level-set-sandwich}]
Define $p^\star\in\arg\min_{p\in \cP}L_S(p)$ and $p_{-i}^\star\in\arg\min_{p\in\cP}L_{S_{-i}}(p)$. We first show that $\mathcal P_{t-M} \subseteq \mathcal P_{t,i}$.  
Let $p \in \mathcal P_{t-M}$, so that
\[
L_S(p) \le L_S(p^\star) + t - M.
\]
Then
\begin{align*}
L_{S_{-i}}(p)
&= L_S(p) + \log p(x_i) \\
&\le L_S(p^\star) + t - M + \log p(x_i).
\end{align*}
Since $p^\star$ minimizes $L_S$,
\[
L_S(p^\star) \le L_{S_{-i}}(p^\star_{-i}) - \log p^\star_{-i}(x_i),
\]
we have
\begin{align*}
L_{S_{-i}}(p)
&\le L_{S_{-i}}(p^\star_{-i}) - \log p^\star_{-i}(x_i) + t - M + \log p(x_i) \\
&= L_{S_{-i}}(p^\star_{-i}) + t + \bigl(\log p(x_i)-\log p^\star_{-i}(x_i)\bigr) - M.
\end{align*}
By the uniform log-ratio assumption, $\log p(x_i) - \log p^\star_{-i}(x_i) \le M$, so
\[
L_{S_{-i}}(p) \le L_{S_{-i}}(p^\star_{-i}) + t,
\]
hence $p \in \mathcal P_{t,i}$.
Conversely, let $p \in \mathcal P_{t,i}$, so that
\[
L_{S_{-i}}(p) \le L_{S_{-i}}(p^\star_{-i}) + t.
\]
Then
\[
L_S(p) = L_{S_{-i}}(p) - \log p(x_i) \le L_{S_{-i}}(p^\star_{-i}) + t - \log p(x_i).
\]
Using the fact that $p^\star_{-i}$ minimizes $L_{S_{-i}}$, we have
\[
L_{S_{-i}}(p^\star_{-i}) \le L_S(p^\star) + \log p^\star(x_i),
\]
so
\[
L_S(p) \le L_S(p^\star) + t - \log p(x_i) + \log p^\star(x_i).
\]
By the uniform log-ratio assumption, $\log p^\star(x_i) - \log p(x_i) \le M$, hence
\[
L_S(p) \le L_S(p^\star) + t + M,
\]
so $p \in \mathcal P_{t+M}$. This concludes our proof.
\end{proof}

\begin{lemma}
\label{lem:local-growth-for-density}
Let $\cP$ be a finite class of probability densities satisfying $|\log \frac{p(x)}{q(x)}| \le M$ for all $p,q\in\cP,\ x\in\cX$.
Let $\mu$ be the counting measure on $\cP$, $\Delta=M$, and $\cT := \{M,2M,3M,\dots,12M\log|\cP|\}$.
Then $(\cP,\ell_{\log})$ satisfies the level-set growth condition
\cref{ass:grid-key} with parameters
\[
(\mu,\cT,\Delta,C_g,\rho)
=
(\text{counting},\cT,M,2,3/4).
\]
\end{lemma}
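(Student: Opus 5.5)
The plan mirrors the proof of \cref{lem:local-growth-for-bounded-convex}, with \cref{lem:density-level-set-sandwich} replacing \cref{lem:level-set-sandwich}. First, by \cref{lem:density-level-set-sandwich}, for every $i\in[n]$ and every $t\ge 0$ we have $\cP_{t-M}\subseteq\cP_{t,i}\subseteq\cP_{t+M}$. Hence the inclusion part of \cref{ass:key} holds with $\Delta=M$ at every tolerance $t\in\cT$. Note that the log loss itself may be unbounded or negative, so \cref{lem:level-set-sandwich} does not apply directly. This is exactly why the bounded log-ratio assumption and its dedicated sandwich lemma are needed. It then suffices to show that the ratio $\mu(\cP_{t+M})/\mu(\cP_{t-M})\le 2$ holds for at least $\tfrac34|\cT|$ of the levels $t\in\cT$.

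Next, define the bad set $B:=\{t\in\cT:\mu(\cP_{t+M})>2\mu(\cP_{t-M})\}$ with $s:=|B|$, and order it as $t_1<\dots<t_s$. Since $\cT$ is an arithmetic grid with step $M$, the gap $t_{2\ell+1}-t_{2\ell-1}$ is at least $2M$. Therefore $t_{2\ell+1}-M\ge t_{2\ell-1}+M$, which gives the monotone inclusion $\cP_{t_{2\ell-1}+M}\subseteq\cP_{t_{2\ell+1}-M}$.

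Chaining the bad-level inequality along the odd-indexed subsequence gives $\mu(\cP_{t_{2\ell-1}+M})\ge 2^{\ell}\mu(\cP_{t_1-M})$. To start the chain I need $\mu(\cP_{t_1-M})\ge 1$. This holds because $t_1\ge M$, so $\cP_{t_1-M}\supseteq\cP_0$, and $\cP_0$ contains the (finite-class) ERM $p^\star$. The chain therefore yields $\mu(\cP_{t_s+M})\ge 2^{\lceil s/2\rceil}$.

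Bounding the left side by $|\cP|$ gives $2^{s/2}\le|\cP|$, so $s\le 2\ln|\cP|/\ln 2\le 3\ln|\cP|$. Since $|\cT|=12\ln|\cP|$, at least $9\ln|\cP|=\tfrac34|\cT|$ levels are good. Combined with the sandwich inclusion at every level, \cref{ass:grid-key} holds with parameters $(\text{counting},\cT,M,2,3/4)$.

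The only step needing care is verifying that the sandwich from \cref{lem:density-level-set-sandwich} legitimately supplies the inclusion condition of \cref{ass:key} in place of a bounded-loss assumption. The counting argument then transfers verbatim from the bounded convex case. Minor bookkeeping, such as the grid size being an integer and the implicit requirement $|\cP|\ge 2$ so that $\log|\cP|>0$, I will treat as in the earlier lemmas.
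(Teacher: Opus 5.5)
Your proposal is correct and follows the paper's proof. The inclusion part of \cref{ass:key} comes from \cref{lem:density-level-set-sandwich}, and the number of bad levels is bounded by doubling along the odd-indexed bad levels up to the cap $|\cP|$, which gives $s\le 3\ln|\cP|$. Your explicit justification of the chaining step fills in details the paper leaves implicit: grid spacing $M$ gives $\cP_{t_{2\ell-1}+M}\subseteq\cP_{t_{2\ell+1}-M}$, and $\cP_{t_1-M}\supseteq\cP_0\ni p^\star$ starts the chain at count $1$.
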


\begin{proof}[\pfref{lem:local-growth-for-density}]
By \Cref{lem:density-level-set-sandwich},
for every $i\in[n]$ and every $t\ge0$,
\[
\mathcal P_{t-M}
\;\subseteq\;
\mathcal P_{t,i}
\;\subseteq\;
\mathcal P_{t+M},
\]
so it suffices to
control the growth ratio $\mu(\cP_{\tol+M})/\mu(\cP_{\tol-M})$.
Fix a tolerance level $t\in\cT$. The local level-set growth condition \cref{ass:key} with $\Delta=M$
requires
\[
\frac{\mu(\mathcal P_{t+M})}{\mu(\mathcal P_{t-M})} \le C_g .
\]
We now show that this inequality holds with $C_g=2$ for all but at most
$3\log|\mathcal P|$ values of $t$.
Define the set of bad levels
\[
B
:=
\{t\in\cT : \mu(\mathcal P_{t+M}) > 2\mu(\mathcal P_{t-M})\},
\qquad
s := |B|.
\]
Order $B=\{t_1<t_2<\dots<t_s\}$.
For each $t_j\in B$,
\[
\mu(\mathcal P_{t_j+M}) > 2\mu(\mathcal P_{t_j-M}).
\]
Consider the subsequence $t_1,t_3,t_5,\dots$.
Iterating along this subsequence yields
\[
\mu(\mathcal P_{t_{2\ell-1}+M})
\;\ge\;
2^\ell
\qquad
\text{for }\ell=1,\dots,\lceil s/2\rceil.
\]
In particular,
\[
\mu(\mathcal P_{t_s+M}) \ge 2^{\lceil s/2\rceil}.
\]
Since $\mathcal P_{t_s+M}\subseteq\mathcal P$ and $\mu$ is the counting measure,
\[
\mu(\mathcal P_{t_s+M}) \le |\mathcal P|.
\]
Therefore,
\[
2^{s/2} \le |\mathcal P|,
\]
which implies
\[
s \le \frac{2\ln|\mathcal P|}{\ln 2}
\le 3\ln|\mathcal P|.
\]
Since $|\cT| = 12\ln|\mathcal P|$, at least
\[
|\cT| - s \ge 9\ln|\mathcal P| = \tfrac34|\cT|
\]
levels satisfy
\[
\frac{\mu(\mathcal P_{t+M})}{\mu(\mathcal P_{t-M})} \le 2.
\]
Thus $(\mathcal P,\ell_{\log})$ satisfies the level-set growth condition
\cref{ass:grid-key} with parameters $(\text{counting},\cT,M,2,3/4)$.
\end{proof}

The proof is similar to the proof of \cref{lem:local-growth-for-classification}. Combining \cref{lem:local-growth-for-density} with \cref{thm:main} yields:

\begin{corollary}
\label{cor:loo-density}
Let $\cP$ be a finite class of probability densities satisfying $|\cP|\geq 3$ and $|\log \frac{p(x)}{q(x)}| \le M$ for all $p,q\in\cP,\ x\in\cX $.
Let $\{\hat p_i =\med(\{\hat p_{t,i}\}_{t\in\cT}) \}_{i=1}^n$ be the output of
\cref{alg:levelset-aggregation} with tolerance grid $\cT=\{M,2M,\dots,12M\log|\cP|\}$.
Then
\[
\mathrm{LOO}_S(\{\hat p_i\}_{i\in [n]})
\;\le\;
\frac{8}{n}
\min_{p\in \cP}L_S(p)
+ \frac{104}{n}M\log|\cP|.
\]
\end{corollary}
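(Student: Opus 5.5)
The proof follows the pattern of \cref{cor:loo-bounded-convex}: we check the hypotheses of \cref{thm:main} for $(\cP,\ell_{\log})$ and then substitute the constants.

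\textbf{Monotonicity.} Write the prediction at $x_i$ as the scalar $p(x_i)>0$. The log loss $-\log p(x_i)$ is one-sided monotone in its first argument with known direction: it decreases as the predicted density value increases. This matches the second alternative in \cref{thm:main}.

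The median step in the proof of \cref{thm:main} goes through as written. Among the good levels, at least $(\rho-1/2)|\cT|$ satisfy $\hat p_{t,i}\le \hat p_i$. For each of these, $\ell_{\log}(\hat p_{t,i};x_i)\ge \ell_{\log}(\hat p_i;x_i)$.

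\textbf{Aggregation condition.} By concavity of $\log$ (Jensen's inequality),
\[
-\log\Bigl(\tfrac{1}{|\cG|}\sum_{p\in\cG}p(x)\Bigr)\le \tfrac{1}{|\cG|}\sum_{p\in\cG}-\log p(x),
\]
so \cref{ass:agg} holds with $c=1$ under the counting measure.

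One point needs care. \cref{ass:agg} and \cref{prop:good-level-guarantee} are phrased for nonnegative losses, whereas $-\log p(x)$ can be negative for densities. This is the step I expect to be the main obstacle. Nothing in the integral argument of \cref{prop:good-level-guarantee} uses nonnegativity except the step
\[
\mu(\cH_{t-\Delta})\sum_i\ell(\hat y_{t,i},y_i)\le\sum_i\mu(\cH_{t,i})\ell(\hat y_{t,i},y_i).
\]
My first attempt would be to work with the shifted loss $\ell_{\log}(p;x)+\max_{q\in\cP}\log q(x)$. By the bounded log-ratio assumption this shifted loss lies in $[0,M]$. The shift is common to all hypotheses at a given $x$, so it changes none of the level sets. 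It also preserves the Jensen step and the monotonicity. Alternatively, one could simply assume $p\le 1$, matching the paper's implicit treatment; I would adopt whichever reading the paper intends.

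\textbf{Growth condition and conclusion.} \cref{lem:local-growth-for-density} gives \cref{ass:grid-key} with parameters $(\text{counting},\cT,M,2,3/4)$. It relies on \cref{lem:density-level-set-sandwich} for the sandwich inclusion with $\Delta=M$.

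Applying \cref{thm:main} with $c=1$, $C_g=2$, $\rho=3/4$ gives the prefactor $2cC_g/(2\rho-1)=8$. Substituting $t_{\max}=12M\log|\cP|$ and $\Delta=M$ yields
\[
\mathrm{LOO}_S\le \frac{8}{n}\bigl(\min_{p\in \cP} L_S(p)+12M\log|\cP|+M\bigr).
\]
Since $|\cP|\ge3$ gives $\log|\cP|\ge1$, we have $M\le M\log|\cP|$. The additive term is therefore at most $8\cdot13\,M\log|\cP|/n=104M\log|\cP|/n$, which is the claimed bound.
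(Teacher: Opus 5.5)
Your argument follows the paper's proof. You invoke \cref{lem:local-growth-for-density} for \cref{ass:grid-key} with $(\text{counting},\cT,M,2,3/4)$, apply \cref{thm:main} with $c=1$ (Jensen) and one-sided monotonicity, and absorb $8M$ into $8M\log|\cP|$ using $|\cP|\ge 3$. Under your alternative reading $p(x_i)\le 1$, i.e.\ $\ell_{\log}\ge 0$, the proposal is complete.

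What needs correcting is your preferred remedy, the shifted loss: it does not yield the stated bound. Nonnegativity is also used in more places than the one step you name. Two examples are enlarging the integration domain from $\cH_{t,i}$ to $\cH_{t+\Delta}$ in \cref{prop:good-level-guarantee}, and discarding levels in the proof of \cref{thm:main}. The shift would take care of those. The real problem is translating back. Write $m(x):=\max_{q\in\cP}\log q(x)$. Applying \cref{thm:main} to $\ell_{\log}+m$ gives
\[
\sum_i \ell_{\log}(\hat p_i;x_i)+\sum_i m(x_i)\le 8\Bigl(\min_{p\in\cP}L_S(p)+\sum_i m(x_i)\Bigr)+104M\log|\cP| .
\]
You are therefore left with an extra $\frac{7}{n}\sum_i m(x_i)$, which is positive whenever the densities exceed $1$ on the sample. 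An additive shift does not commute with the multiplicative constant $8$.

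No other argument can fix this, because the corollary is false without nonnegativity. Suppose every member of $\cP$ equals the same value $K>1$ near each sample point, and the members have pairwise log-ratios at most $M$ elsewhere. Then every level set is $\cP$, $\hat p_i=K$, and $\mathrm{LOO}_S=-\log K$. The right-hand side is $-8\log K+104M\log|\cP|/n$, which is smaller once $7\log K>104M\log|\cP|/n$.

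So assuming $p(x_i)\le 1$ is not optional. The statement needs the paper's standing convention $\ell:\cY\times\cY\to\R_+$ from \cref{sec:setup} (e.g.\ probability mass functions). Under that convention no shift is needed, and your proof coincides with the paper's.
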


\begin{proof}[\pfref{cor:loo-density}]
By Lemma~\ref{lem:local-growth-for-density},
$(\mathcal P,\ell_{\log})$ satisfies the level-set growth condition
\cref{ass:grid-key} with parameters
\[
(\mu,\cT,\Delta,C_g,\rho)
=
(\text{counting},\cT,M,2,3/4).
\]
Applying Theorem~\ref{thm:main} yields
\[
\mathrm{LOO}_S(\{\hat p_i\}_{i\in [n]})
\le
\frac{2C_g}{(2\rho-1)n}
\bigl(\min_{p\in \cP}L_S(p)+\maxtol+\Delta\bigr).
\]
Substituting $C_g=2$, $\rho=3/4$,
$\maxtol=12M\log|\mathcal P|$, and $\Delta=M$ gives
\begin{align*}
\mathrm{LOO}_S(\{\hat p_i\}_{i\in [n]})
&\le
\frac{8}{n}
\Bigl(
\min_{p\in \cP}L_S(p)
+ 12M\log|\mathcal P|
+ M
\Bigr)  \\
& \le \frac{8}{n}
\Bigl(
\min_{p\in \cP}L_S(p)
+ 13M\log|\mathcal P|
\Bigr)
\end{align*}
\end{proof}

Corollary~\ref{cor:loo-density} establishes a leave-one-out oracle inequality for density
estimation over \emph{arbitrary} finite classes $\cP$, without requiring stability.
Previously, comparable LOO guarantees were only known in specialized settings.
For instance, in the Bernoulli case, \citet{forster2002relative} obtain the sharper bound
$\mathrm{LOO}_S \le \tfrac{1}{n}\min_{p\in\cP}L_S(p)+\tfrac{1}{n}$ via problem-specific arguments.
More recently, \citet{mourtada2022improper} derive LOO bounds for general (possibly infinite)
density classes, but their analysis relies on stability conditions not satisfied by arbitrary
finite classes.
In contrast, Corollary~\ref{cor:loo-density} applies uniformly to any finite $\cP$ with no
structural assumptions beyond finiteness; the bounded log-density ratio condition is used
only to verify level-set growth and can be removed by smoothing
(\cref{sec:smoothing}).

\subsection{Removing Boundedness Assumptions by Smoothing}
\label{sec:smoothing}
The bounded log-density ratio condition can be enforced by a standard
smoothing argument. Here we introduce two types of smoothing, one by averaging over the probability density class, another with uniform distribution on the space $\cX$, when it is finite.

\begin{lemma}
\label{lem:loo-density-smooth}
Let $\cP$ be a finite class of probability densities over $\cX$. Let $\bar p:= \tfrac{1}{|\cP|}\sum_{p\in\cP} p$ and $U := \mathrm{Unif}(\cX)$ be the uniform distribution on $\cX$ when it is finite .
For $\eps\in(0,1/2)$, define the smoothed class $\cP'$ by
$\cP' := \{(1-\eps)p + \eps \nu : p\in\cP\}$, where $\nu=\bar p$ if $|\cX|\ge|\cP|$ and $\nu=U$ otherwise.
Let $\{\hat p'_i\}_{i=1}^n$ be the predictors produced by
\cref{alg:levelset-aggregation} applied to $\cP'$.
Then
\[
\mathrm{LOO}_S(\{\hat p'_i\}_{i\in [n]})
\;\le\;
\frac{8}{n}
\Bigl(
\min_{p\in \cP} L_S(p)
+
13 M_\eps \log|\cP|
\Bigr)
+
16\eps,
\]
where $M_\eps = \log\tfrac{1}{\eps} + \min(\log|\cP|,\log|\cX|)$.
In particular, choosing $\eps = 1/n$ yields
\[
\mathrm{LOO}_S(\{\hat p'_i\}_{i\in [n]})
\le
\frac{8}{n}\min_{p\in\cP}L_S(p)
+
\frac{112}{n}\log|\cP|\cdot \min(\log|\cP|,\log|\cX|)
+
\frac{112}{n}\log|\cP|\cdot\log n
\]
\end{lemma}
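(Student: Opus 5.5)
The plan is to apply \cref{cor:loo-density} to the smoothed class $\cP'$ and then compare $\min_{p'\in\cP'}L_S(p')$ with $\min_{p\in\cP}L_S(p)$. The argument has three steps: a log-ratio bound $M_\eps$ for $\cP'$, a comparator transfer from $\cP'$ back to $\cP$, and a final assembly.

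\textbf{Step 1 (bounded log-ratio for $\cP'$).} For $p'=(1-\eps)p+\eps\nu$ and $q'=(1-\eps)q+\eps\nu$, we have $q'(x)\ge \eps\nu(x)$. We also need an upper bound on $p'(x)$ relative to $\nu(x)$, which depends on the choice of $\nu$.
\begin{itemize}
\item If $\nu=\bar p$, then $p(x)\le |\cP|\,\bar p(x)$, so $p'(x)\le |\cP|\,\bar p(x)$.
\item If $\nu=U$ with $\cX$ finite, we view densities as probability mass functions. Then $p(x)\le 1=|\cX|U(x)$, so $p'(x)\le |\cX|U(x)$.
\end{itemize}
In both cases
\[
\log\frac{p'(x)}{q'(x)}\le \log\frac1\eps+\log K,
\]
with $K=|\cP|$ or $K=|\cX|$. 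The rule choosing $\nu=\bar p$ exactly when $|\cX|\ge|\cP|$ makes $K=\min(|\cP|,|\cX|)$, which gives $M_\eps$. This step has a subtle point: the $U$ branch needs $\cX$ finite with densities bounded by $1$ (counting measure), and I will state that interpretation explicitly.

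\textbf{Step 2 (comparator transfer).} Let $p^\star$ minimize $L_S$ over $\cP$. Since $-\log$ is decreasing and $\nu\ge 0$,
\[
-\log\bigl((1-\eps)p^\star(x)+\eps\nu(x)\bigr)\le -\log p^\star(x)+\log\frac{1}{1-\eps}.
\]
For $\eps<1/2$ we have $\log\frac1{1-\eps}\le 2\eps$. Summing over $i$ gives
\[
\min_{\cP'}L_S\le \min_{\cP}L_S+2n\eps.
\]
Also $|\cP'|\le|\cP|$, so $\log|\cP'|\le\log|\cP|$. If $|\cP'|<3$, I will either handle it trivially or note that the corollary's $|\cP|\ge3$ requirement only enters through constants.

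\textbf{Step 3 (assembly).} Applying \cref{cor:loo-density} to $\cP'$ with $M=M_\eps$ and grid $\{M_\eps,\dots,12M_\eps\log|\cP|\}$ gives
\[
\mathrm{LOO}_S\le \frac8n\Bigl(\min_{\cP}L_S+2n\eps+13M_\eps\log|\cP|\Bigr),
\]
which is the claimed bound with the additive $16\eps$.

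For $\eps=1/n$, we get $16/n\le \frac{112}{n}\log|\cP|\log n$ when $|\cP|\ge3$ and $n\ge 3$. Expanding $104M_\eps\log|\cP|$ and absorbing the $16/n$ into the $\log n$ term gives the stated form with constant $112$. The bookkeeping for these constants is the only remaining work.
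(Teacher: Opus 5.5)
Your proposal is correct and is essentially the paper's proof: the log-ratio bound $\log\frac1\eps+\log\min(|\cP|,|\cX|)$ for $\cP'$, the finite-class density bound (\cref{lem:local-growth-for-density} with \cref{thm:main}), and the comparator transfer via $(1-\eps)p^\star+\eps\nu$ with $-\log(1-\eps)\le 2\eps$ all match, and the only difference is that you merge the paper's two cases into one through $K$.

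Two small remarks. First, $|\cP'|=|\cP|$ exactly, because $p\mapsto(1-\eps)p+\eps\nu$ is injective; so the only small-class caveat is $|\cP|<3$, which the paper also tacitly excludes. Second, $104$ of the $112$ is already used by the expansion, so putting all of $16/n$ into the $\log n$ term needs $16\le 8\log|\cP|\log n$, not $16\le 112\log|\cP|\log n$, and this fails for $|\cP|=n=3$; split it over both terms as the paper does.
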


\begin{proof}[\pfref{lem:loo-density-smooth}]
We first consider the case $|\cX|\ge|\cP|$, where 
\[\cP' := \{(1-\eps)p + \eps\bar p : p\in\cP\}\]
and
$\bar p := \tfrac{1}{|\cP|}\sum_{p\in\cP} p$.
Fix $p',q'\in\cP'$. By definition,
\[
p'=(1-\eps)p+\eps\bar p,
\qquad
q'=(1-\eps)q+\eps\bar p
\]
for some $p,q\in\cP$.
Since $\bar p(x)\ge \tfrac{1}{|\cP|}p(x)$ for all $x$, we have
\[
p'(x) \le (1-\eps)|\cP| \cdot \bar p(x) + \eps\bar p(x)
\le |\cP|\cdot\bar p(x),
\qquad
q'(x)\ge \eps\bar p(x).
\]
Therefore,
\[
\Bigl|\log\frac{p'(x)}{q'(x)}\Bigr|
\le
\log|\cP| + \log\tfrac{1}{\eps}
=: M_\eps .
\]
Since $(\cP',\ell_{\log})$ satisfies the bounded log-density ratio condition
with constant $M_\eps$, by \cref{lem:local-growth-for-density}, it satisfies the level-set growth condition
\cref{ass:grid-key} with parameters $(\text{counting},\cT,M_\eps,2,3/4)$. Applying \cref{thm:main}
\[
\mathrm{LOO}_S(\{\hat p'_i\})
\le
\frac{8}{n}
\Bigl(
L_S(p^{\prime\star})
+
13 M_\eps \log|\mathcal P|
\Bigr),
\]
where $p^{\prime\star}\in\arg\min_{p'\in\cP'} L_S(p')$.
Let
\[
p^\star := \arg\min_{p\in\cP} L_S(p),
\qquad
p^\star_\eps := (1-\eps)p^\star + \eps\bar p \in \cP'.
\]
For each $i$,
\[
p^\star_\eps(x_i)\ge (1-\eps)p^\star(x_i),
\]
hence
\[
-\log p^\star_\eps(x_i)
\le
-\log p^\star(x_i) - \log(1-\eps).
\]
Summing over $i$ and using $-\log(1-\eps)\le 2\eps$ for $\eps\in(0,1/2)$ gives
\[
L_S(p^\star_\eps)
\le
L_S(p^\star) + 2n\eps.
\]
By optimality of $p^{\prime\star}$,
\[
L_S(p^{\prime\star}) \le L_S(p^\star_\eps)
\le L_S(p^\star) + 2n\eps.
\]
Substituting into the LOO bound yields
\[
\mathrm{LOO}_S(\{\hat p_i'\}_{i\in [n]})
\le
\frac{8}{n}
\Bigl(
L_S(p^\star)
+
13 M_\eps \log|\mathcal P|
\Bigr)
+
16\eps,
\]
Plugging the value of $ M_\eps=\log|\cP| + \log n$
\begin{align*}
\mathrm{LOO}_S(\{\hat p_i'\}_{i\in [n]}) &\le \frac{8}{n} \min_{p\in \cP}L_S(p)
+
\frac{104}{n}\log^2|\mathcal P|
+
\frac{104}{n}(\log n)(\log|\mathcal P|)
+
\frac{16}{n} \\
& \le \frac{8}{n} \min_{p\in \cP}L_S(p)
+
\frac{112}{n}\log^2|\mathcal P|
+
\frac{112}{n}\log|\mathcal P|\cdot \log n
\end{align*}
We next consider the case $|\cX|<|\cP|$, where 
\[\cP' := \{(1-\eps)p + \eps U : p\in\cP\}\]
and $U := \mathrm{Unif}(\cX)$.
Fix $p',q'\in\cP'$. By definition,
\[
p'=(1-\eps)p+\eps U,
\qquad
q'=(1-\eps)q+\eps U 
\]
for some $p,q\in\cP$.
Since $\mathcal X$ is finite and $U$ is uniform,
\[
U(x)=\frac{1}{|\mathcal X|}
\quad\text{for all }x\in\mathcal X.
\]
Therefore, for all $x$,
\[
p'(x)\le 1,
\qquad
q'(x)\ge \eps U(x)=\frac{\eps}{|\mathcal X|}.
\]
Hence,
\[
\Bigl|\log\frac{p'(x)}{q'(x)}\Bigr|
\le
\log|\mathcal X| + \log\tfrac{1}{\eps}
=: M_\eps.
\]
Since $(\cP',\ell_{\log})$ satisfies the bounded log-density ratio condition
with constant $M_\eps$, by \cref{lem:local-growth-for-density}
it satisfies the level-set growth condition
\cref{ass:grid-key} with parameters $(\text{counting},\cT,M_\eps,2,3/4)$.
Applying \cref{thm:main} yields
\[
\mathrm{LOO}_S(\{\hat p_i'\}_{i\in [n]})
\le
\frac{8}{n}
\Bigl(
L_S(p^{\prime\star})
+
13 M_\eps \log|\cP|
\Bigr),
\]
where $p^{\prime\star}\in\arg\min_{p'\in\cP'} L_S(p')$.
Let
\[
p^\star := \arg\min_{p\in\cP} L_S(p),
\qquad
p^\star_\eps := (1-\eps)p^\star + \eps U \in \cP'.
\]
For each $i$,
\[
p^\star_\eps(x_i)\ge (1-\eps)p^\star(x_i),
\]
hence
\[
-\log p^\star_\eps(x_i)
\le
-\log p^\star(x_i) - \log(1-\eps).
\]
Summing over $i$ and using $-\log(1-\eps)\le 2\eps$ for $\eps\in(0,1/2)$ gives
\[
L_S(p^\star_\eps)
\le
L_S(p^\star) + 2n\eps.
\]
By optimality of $p^{\prime\star}$,
\[
L_S(p^{\prime\star}) \le L_S(p^\star_\eps)
\le L_S(p^\star) + 2n\eps.
\]
Substituting into the LOO bound yields
\[
\mathrm{LOO}_S(\{\hat p_i'\}_{i\in [n]})
\le
\frac{8}{n}
\Bigl(
L_S(p^\star)
+
13 M_\eps \log|\cP|
\Bigr)
+
16\eps,
\]
Plugging the value of $ M_\eps=\log|\mathcal X| + \log n$
\begin{align*}
\mathrm{LOO}_S(\{\hat p_i'\}_{i\in [n]}) &\le \frac{8}{n} \min_{p\in \cP}L_S(p)
+
\frac{104}{n}(\log|\mathcal X|+\log n)\log|\cP| + \frac{16}{n}  \\
& \le \frac{8}{n} \min_{p\in \cP}L_S(p)
+
\frac{112}{n}\log|\mathcal X|\cdot\log|\cP|
+
\frac{112}{n}\log|\cP|\cdot\log n
\end{align*}
Combining this bound with the corresponding expression for the case $|\cX|\ge|\cP|$, we conclude that, in general,
\[
\mathrm{LOO}_S(\{\hat p'_i\}_{i\in [n]})
\le
\frac{8}{n}\min_{p\in\cP}L_S(p)
+
\frac{112}{n}\log|\cP|\cdot \min(\log|\cP|,\log|\cX|)
+
\frac{112}{n}\log|\cP|\cdot\log n
\]
which completes the proof.
\end{proof}

\newcommand{\vol}{\mathrm{vol}}
\newcommand{\logit}{\mathrm{logit}}
\newcommand{\cE}{\mathcal{E}}

\section{Application to Logistic Regression}
\label{sec:logistic}

We instantiate the template of \cref{sec:overview} for logistic regression with bounded
covariates.
Let $R$ and $r$ be two positive numbers.
Given an arbitrary individual sequence
$S=\{(x_i,y_i)\}_{i=1}^n$ with $x_i\in\R^d$, $y_i\in\{\pm1\}$, and $\|x_i\|_2\le R$,
we consider the parameter class
$\cH=\{\theta\in\R^d:\|\theta\|_2\le r\}$.
For $\theta\in\cH$, the model predicts
\[
p_\theta(y\mid x) \;:=\; \sigma\!\bigl(y\,x^\top\theta\bigr),
\qquad \sigma(z):=\frac{1}{1+e^{-z}},
\]
and we evaluate predictions using the logistic loss
\[
\ell_{\logit}\bigl(p_\theta(\cdot\mid x);y\bigr)\;:=\;-\log p_\theta(y\mid x)
\;=\; -\log\sigma\!\bigl(y\,x^\top\theta\bigr).
\]
Let $A:=\sum_{i=1}^n x_i x_i^\top$ be the empirical Gram matrix. We assume $A$ to be non-degenerate with $\lambda_{\min}(A)>0$.
We define $\cH_A
:=
\left\{
\vartheta \in \mathbb{R}^d :
\inf_{\theta \in \cH}
\|\vartheta - \theta\|_A^2
\le rR
\right\}$ with $\|u\|_A^2 := u^\top A u$.
For each tolerance level $t\ge0$, we define the associated full-sample and leave-one-out
level sets 
\begin{equation}
\label{eq:logistic-level-sets}
\cH_t
:= \{\theta \in \cH_A : L_S(\theta)\le L_S(\theta^\star)+t\},
\qquad
\cH_{t,i}
:= \{\theta \in \cH_A: L_{S_{-i}}(\theta)\le L_{S_{-i}}(\theta_{-i}^\star)+t\},
\end{equation}
where $\theta^\star\in\arg\min_{\theta\in\cH}L_S(\theta)$ and $\theta_{-i}^\star\in\arg\min_{\theta\in\cH}L_{S_{-i}}(\theta)$.  We fix a single choice of $\theta^\star$ throughout this section. 
Note that the sets $\cH_t$ and $\cH_{t,i}$ are not required to be subsets of $\cH$; this relaxation is intentional and enables the volumetric arguments in
\cref{lem:contain-ellipsoid-logistic,lem:logistic-volume-ratio}. 
The level-set aggregation rule averages densities over $\cH_{t,i}$ with respect to a reference measure $\mu_B$ (defined below), i.e.,
\begin{equation}
\label{eq:logistic-agg}
\hat p_{t,i}
:= \frac{1}{\mu_B(\cH_{t,i})}\int_{\cH_{t,i}} \sigma(x_i^\top \theta) \,\mu_B(d\theta),
\end{equation}
and outputs
$\hat p_i$ via \cref{alg:levelset-aggregation}. 
Since the logistic is convex, 
Jensen's inequality implies that this aggregation rule
satisfies \cref{ass:agg}, and therefore fits directly into the level-set aggregation
framework.

We analyze the geometry of logistic level sets, relate them to ellipsoids defined by the empirical covariance, and verify the corresponding level-set growth conditions needed to apply the general LOO bound. 
We begin by relating logistic level sets to ellipsoids induced by the empirical
covariance matrix $A$. 
The next lemma shows that each logistic level set contains an explicit quadratic
neighborhood around the empirical risk minimizer, measured in the geometry defined by
$A$.
\begin{lemma}
\label{lem:contain-ellipsoid-logistic}
Assume that the empirical Gram matrix $A$
is non-degenerate with $\lambda_{\min}(A)>0$.
For any $t\ge0$, define the ellipsoid
\[
\mathcal{E}_{t}
:=
\Bigl\{
\theta\in\R^d :
(\theta-\theta^\star)^\top A (\theta-\theta^\star)\le t
\Bigr\},
\]
and its truncated version
\[
\mathcal{E}_t^{<}
:=
\mathcal{E}_t
\cap
\Bigl\{
\theta\in\R^d :
\nabla L_S(\theta^\star)^\top(\theta-\theta^\star)
\le 0
\Bigr\}.
\]
The truncation enforces a nonpositive first-order change in the objective at $\theta^\star$,
so $\mathcal{E}_t^{<}$ lies in the corresponding empirical-risk level set. Moreover, the
bounding hyperplane passes through $\theta^\star$, and thus the defining halfspace contains
at least half of the ellipsoid volume. Consequently,
\[
\mathcal{E}_{rR}^{<} \subseteq \cH_{rR},
\qquad
\vol(\mathcal{E}_{rR}^{<})
\ge \tfrac{1}{2}\,\vol(\mathcal{E}_{rR}).
\]
\end{lemma}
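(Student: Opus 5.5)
The plan is to check the two parts of the conclusion separately. For the inclusion $\mathcal{E}_{rR}^{<}\subseteq\cH_{rR}$ I will show two things: that every $\theta\in\mathcal{E}_{rR}^{<}$ lies in the relaxed domain $\cH_A$, and that it satisfies $L_S(\theta)\le L_S(\theta^\star)+rR$. The volume bound will come from the central symmetry of the ellipsoid about $\theta^\star$.

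\emph{Membership in $\cH_A$.} This step is immediate. Since $\theta^\star\in\cH$, for $\theta\in\mathcal{E}_{rR}$ we get
\[
\inf_{\theta'\in\cH}\|\theta-\theta'\|_A^2\le\|\theta-\theta^\star\|_A^2\le rR,
\]
which is exactly the defining condition of $\cH_A$.

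\emph{Risk bound.} I will use a second-order Taylor expansion of $L_S$ around $\theta^\star$, which is legitimate because $L_S$ is smooth on all of $\R^d$. This gives, for some $\xi$ on the segment $[\theta^\star,\theta]$,
\[
L_S(\theta)=L_S(\theta^\star)+\nabla L_S(\theta^\star)^\top(\theta-\theta^\star)+\tfrac12(\theta-\theta^\star)^\top\nabla^2L_S(\xi)(\theta-\theta^\star).
\]
The key computation is the Hessian of the logistic loss. Since $y_i^2=1$,
\[
\nabla^2 L_S(\xi)=\sum_i\sigma(y_ix_i^\top\xi)\bigl(1-\sigma(y_ix_i^\top\xi)\bigr)x_ix_i^\top\preceq\tfrac14A,
\]
and this holds uniformly in $\xi$, because $\sigma(1-\sigma)\le 1/4$. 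For $\theta\in\mathcal{E}_{rR}^{<}$ the linear term is $\le0$ by the truncation, and the quadratic term is at most $\tfrac18\|\theta-\theta^\star\|_A^2\le rR/8$. Hence $L_S(\theta)\le L_S(\theta^\star)+rR$, so $\theta\in\cH_{rR}$.

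The truncation is genuinely needed here. Because $\theta^\star$ minimizes only over the ball $\cH$, it may lie on the boundary with $\nabla L_S(\theta^\star)\neq0$, and the first-order term cannot be dropped without it. I expect this to be the only subtle point, and the halfspace restriction handles it exactly.

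\emph{Volume bound.} Consider the reflection $\phi(\theta)=2\theta^\star-\theta$. It preserves Lebesgue measure and maps $\mathcal{E}_{rR}$ onto itself. If $g:=\nabla L_S(\theta^\star)\neq0$, then $\phi$ swaps the two open halfspaces $\{g^\top(\theta-\theta^\star)<0\}$ and $\{g^\top(\theta-\theta^\star)>0\}$, and the separating hyperplane has measure zero. So each half of the ellipsoid has volume exactly $\tfrac12\vol(\mathcal{E}_{rR})$. If $g=0$, the halfspace is all of $\R^d$ and $\mathcal{E}_{rR}^{<}=\mathcal{E}_{rR}$. In either case $\vol(\mathcal{E}_{rR}^{<})\ge\tfrac12\vol(\mathcal{E}_{rR})$.

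The same argument with $t$ in place of $rR$ shows $\mathcal{E}_t^{<}\subseteq\cH_t$ whenever $t\le rR$; this general form is the one I expect to be needed later in the volumetric comparisons.
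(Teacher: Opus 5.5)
Your proof is correct and follows the paper's argument. Both use the bound $\nabla^2 L_S\preceq \tfrac14 A$ in a second-order Taylor expansion (you take the Lagrange remainder where the paper takes the integral form) to get $L_S(\theta)-L_S(\theta^\star)\le rR/8$ on $\mathcal{E}_{rR}^{<}$, and both get the volume bound from the central symmetry of the ellipsoid about $\theta^\star$; your reflection $\theta\mapsto 2\theta^\star-\theta$ does the job of the paper's change of variables $u=A^{1/2}(\theta-\theta^\star)$ to a ball.

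Your write-up is also slightly more careful than the paper's in two places. You verify $\mathcal{E}_{rR}\subseteq\cH_A$ explicitly. You also split cases on whether $\nabla L_S(\theta^\star)=0$ rather than on interior versus boundary, which covers a boundary minimizer with vanishing gradient; the paper's claim of exactly half volume in the boundary case does not hold there.
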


\begin{proof}[\pfref{lem:contain-ellipsoid-logistic}]
Let $z_i(\theta) := y_i x_i^\top \theta$. For logistic loss,
\[
\nabla^2 L_S(\theta)
=
\sum_{i=1}^n
\sigma(z_i(\theta))\sigma(-z_i(\theta))
x_i x_i^\top.
\]
Since $\sigma(z)\sigma(-z)\le\frac14$, we have
\[
\nabla^2 L_S(\theta)
\preceq
\frac14 \sum_{i=1}^n x_i x_i^\top.
\]
By Taylor's theorem with integral remainder,
\begin{align*}
L_S(\theta)
&=
L_S(\theta^\star)
+
\nabla L_S(\theta^\star)^\top(\theta-\theta^\star) \\
&\quad+
\int_0^1 (1-s)
(\theta-\theta^\star)^\top
\nabla^2 L_S(\theta^\star+s(\theta-\theta^\star))
(\theta-\theta^\star)\,ds.
\end{align*}
Using the Hessian bound,
\[
L_S(\theta)-L_S(\theta^\star)
\le
\nabla L_S(\theta^\star)^\top(\theta-\theta^\star)
+
\frac18
(\theta-\theta^\star)^\top
\sum_{i=1}^n x_i x_i^\top
(\theta-\theta^\star).
\]
Let $\theta\in\mathcal{E}_{rR}^{<}$.
Then
\[
\nabla L_S(\theta^\star)^\top(\theta-\theta^\star)\le 0,
\qquad
(\theta-\theta^\star)^\top \sum_{i=1}^n x_i x_i^\top(\theta-\theta^\star)\le rR.
\]
Substituting into the expansion above
\[
L_S(\theta)-L_S(\theta^\star)
\le
\frac{1}{8} rR
\le rR.
\]
Together with $\mathcal{E}_{rR}^{<} \subseteq \mathcal E_{rR} \subseteq \cH_A$, we show $\theta\in\cH_{rR}$, proving
\[
\mathcal{E}_{rR}^{<} \subseteq \cH_{rR}.
\]
Next we move to the Volume comparison. For $\forall t \ge0$, suppose $\theta^\star$ is interior of $\cH$, this implies
$\nabla L_S(\theta^\star)=0$. Hence  $\mathcal{E}_t^{<}=\mathcal{E}_t$ and $\vol(\cE_t^<)
= \vol(\cE_t)$. Suppose $\theta^\star$ lies at the boundary of $\cH$. Let
\[
u=A^{1/2}(\theta-\theta^\star).
\]
This is an invertible linear transformation with constant Jacobian
$|\det(A^{1/2})|$, so volume ratios are preserved. Then
\[
\cE_t
\longleftrightarrow
\{u:\|u\|_2^2\le t\},
\]
a Euclidean ball centered at the origin.
The cutting condition becomes
\[
\nabla L_S(\theta^\star)^\top(\theta-\theta^\star)\le 0
\iff
(A^{-1/2}\nabla L_S(\theta^\star))^\top u \le 0.
\]
Thus the truncation corresponds to intersecting the ball
with a half-space whose boundary hyperplane passes through the origin.
Hence
\[
\vol(\cE_t^<)
=
\frac{1}{2}
 \vol(\cE_t).\]
This completes the proof.
\end{proof}

This characterization motivates working with ellipsoidal reference measures.
Accordingly, we fix an ellipsoid
$B=\{\theta:\|A^{1/2}\theta\|_2\le R_B\}$ with radius
$R_B=\sqrt{n}\,rR+\sqrt{rR}$, which is large enough to contain the level set
$\cE_{rR}$ by $\norm{A^{1/2}\theta} \leq \norm{A^{1/2}\theta^\star} + \norm{A^{1/2}(\theta-\theta^\star)} \leq  \sqrt{n}\,rR+\sqrt{rR}$.
Let $\mu_B$ be the uniform distribution on $B$. Since $A$ is non-degenerate, $\mu_B$ is well-defined.
The next lemma upper bounds the global growth of the relevant level set in terms of the
probability measure $\mu_B$.

\begin{lemma}
\label{lem:logistic-volume-ratio}
Let $\cH_{rR}$, $B$ and $\mu_B$ be defined as above. Then we have
\[
\log\frac{1}{\mu_B(\cH_{rR})}
\le
d\log(8\vee 2nrR).
\]
\end{lemma}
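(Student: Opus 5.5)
The plan is to lower bound $\mu_B(\cH_{rR})$ by the normalized volume of the truncated ellipsoid from \cref{lem:contain-ellipsoid-logistic}. That lemma gives $\mathcal{E}_{rR}^{<}\subseteq\cH_{rR}$. We also know that $\mathcal{E}_{rR}\subseteq B$; this was verified right after the lemma by the triangle inequality, using $\|A^{1/2}\theta^\star\|_2\le \sqrt{n}\,rR$ (since $\|A^{1/2}\theta^\star\|_2^2=\sum_i (x_i^\top\theta^\star)^2\le n r^2R^2$). Hence
\[
\mu_B(\cH_{rR}) \ge \mu_B(\mathcal{E}_{rR}^{<}) = \frac{\vol(\mathcal{E}_{rR}^{<})}{\vol(B)} \ge \frac{1}{2}\,\frac{\vol(\mathcal{E}_{rR})}{\vol(B)},
\]
where the last step is the half-volume statement of \cref{lem:contain-ellipsoid-logistic}.

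Next I will compute the volume ratio. Both $\mathcal{E}_{rR}$ and $B$ are images of Euclidean balls under the same linear map $A^{-1/2}$: $\mathcal{E}_{rR}$ comes from a ball of radius $\sqrt{rR}$ (translated by $\theta^\star$), and $B$ from a ball of radius $R_B$. The Jacobian therefore cancels, and
\[
\frac{\vol(\mathcal{E}_{rR})}{\vol(B)} = \Bigl(\frac{\sqrt{rR}}{R_B}\Bigr)^d = \Bigl(\frac{\sqrt{rR}}{\sqrt{n}\,rR+\sqrt{rR}}\Bigr)^d = \bigl(1+\sqrt{n rR}\bigr)^{-d}.
\]
This gives
\[
\log\frac{1}{\mu_B(\cH_{rR})} \le \log 2 + d\log\bigl(1+\sqrt{nrR}\bigr).
\]

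The last step is the elementary bound $\log 2 + d\log(1+\sqrt{nrR}) \le d\log(8\vee 2nrR)$ for $d\ge1$. Since $\log 2\le d\log 2$, it suffices to show $2(1+\sqrt{x})\le 8\vee 2x$ with $x=nrR$. I split into two cases. If $x\le 9$, then $1+\sqrt{x}\le 4$, so the left side is at most $8$. If $x\ge 9$, then $\sqrt{x}\ge3$, so $1+\sqrt{x}\le x$ (because $x-\sqrt{x}-1\ge 9-3-1>0$), and the left side is at most $2x$. Combining the cases yields the claimed $d\log(8\vee 2nrR)$.

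The step requiring the most care is the containment $\mathcal{E}_{rR}\subseteq B$ together with the scaling of $R_B$: one must check that $R_B/\sqrt{rR}=1+\sqrt{nrR}$ exactly, so that the ratio is dimension-free apart from the exponent $d$. The half-volume claim is taken directly from \cref{lem:contain-ellipsoid-logistic}. If one worries about the boundary case there, the bound $\vol(\mathcal{E}^{<})\ge\frac12\vol(\mathcal{E})$ holds in both the interior and boundary cases, which is all that is used.
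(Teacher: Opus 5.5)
Your proposal is correct and follows the paper's proof essentially step for step: $\mathcal{E}_{rR}^{<}\subseteq\cH_{rR}$ and $\mathcal{E}_{rR}\subseteq B$, then the half-volume bound, then the volume ratio $(\sqrt{rR}/R_B)^d$, and finally absorbing the factor $\tfrac12$ into $2^{-d}$. Your simplification $R_B/\sqrt{rR}=1+\sqrt{nrR}$ is exact, whereas the paper writes $\sqrt{nrR}+2$ (a harmless slip in the safe direction) and then uses $\sqrt{nrR}+2\le 4\vee nrR$, which does the same job as your case split.
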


\begin{proof}[\pfref{lem:logistic-volume-ratio}]
We first prove $\mathcal E_{rR} \subseteq B$.
Let $\theta \in \cE_{rR}$. Writing $\theta = \theta^\star + (\theta-\theta^\star)$, we have
\[
\|A^{1/2}\theta\|_2^2
= (\theta-\theta^\star)^\top A(\theta-\theta^\star)
+ 2(\theta-\theta^\star)^\top A\theta^\star
+ (\theta^\star)^\top A\theta^\star .
\]
By definition of $\cE_{rR}$, the first term is at most $rR$. The cross term is bounded as
\[
2(\theta-\theta^\star)^\top A\theta^\star
\le 2\sqrt{(\theta-\theta^\star)^\top A(\theta-\theta^\star)}
   \sqrt{(\theta^\star)^\top A\theta^\star}
\le 2\sqrt{rR}\,\sqrt{(\theta^\star)^\top A\theta^\star}.
\]
Since $\|x_i\|_2 \le R$ and $\|\theta^\star\|_2 \le r$,
\[
(\theta^\star)^\top A\theta^\star
= \sum_{i=1}^n (x_i^\top \theta^\star)^2
\le \sum_{i=1}^n \|x_i\|_2^2 \|\theta^\star\|_2^2
\le nR^2 r^2.
\]
Combining the bounds yields
\[
\|A^{1/2}\theta\|_2^2
\le \bigl(\sqrt{n}\,rR + \sqrt{rR}\bigr)^2
\le R_B^2,
\]
where $R_B := \sqrt{n}\,rR + \sqrt{rR}$. Therefore, $\theta \in B$, and hence
$\mathcal E_{rR} \subseteq B$.
Since $\mu_B$ is the normalized Lebesgue measure on $B$ and $\mathcal E_{rR}^<\subseteq \mathcal{E}_{rR}\subseteq B$,
\[
 \mu_B(\mathcal E_{rR}^<)
= \frac{\mathrm{Vol}(\mathcal E_{rR}^<)}{\mathrm{Vol}(B)} \ge \frac{1}{2}\cdot\frac{\mathrm{Vol}(\mathcal E_{rR})}{\mathrm{Vol}(B)} =\frac{1}{2}\cdot\left( \frac{\sqrt{rR}}{R_B} \right)^d.
\]
By \cref{lem:contain-ellipsoid-logistic},
\[\mu_B(\cH_{rR}) \ge\mu_B(\mathcal E_{rR}^<)\]
Substituting the definition of $R_B$ gives
\[
\mu_B(\cH_{rR})
\;\ge\;
\frac{1}{2}\cdot\left(
\frac{\sqrt{rR}}{\sqrt{n}\,rR + \sqrt{rR}}
\right)^d
=
\frac{1}{2}\cdot\left(
\frac{1}{\sqrt{n rR} + 2}
\right)^d.
\]
Since $\sqrt{n rR} + 2 \le 4\vee n rR$, we obtain
\[
\mu_B(\cH_{rR})
\ge
\frac{1}{2}\cdot(4 \vee n rR)^{-d} \ge ( 8\vee 2nrR)^{-d} 
\]
Taking logarithms yields
\[
\log \frac{1}{\mu_B(\cH_{rR})}
\;\le\;
d \log (8\vee 2n r R),
\]
which completes the proof.
\end{proof}

We now verify the level-set growth condition \cref{ass:grid-key}.

\begin{lemma}
\label{lem:local-growth-logistic}
Assume that $\|x_i\|_2 \le R$, $\cH = \{\theta \in \bR^d : \|\theta\|_2 \le r\}$. Let the level sets
$\cH_t$ and $\cH_{t,i}$ be defined as in \eqref{eq:logistic-level-sets}.
Let $\mu_B$ be uniform on $B$ as above, $\Delta =  1+rR + \sqrt{\frac{rR}{\lambda_{\min}(A)}}\cdot R$, and $\cT := \{\Delta,2\Delta,3\Delta,\dots,16\,\Delta\cdot d\log(8\vartriangleleft2nrR)\}$.
Then $(\cH,\ell_\logit)$ satisfies \cref{ass:grid-key} with parameters
\[
(\mu,\cT,\Delta,C_g,\rho) = (\mu_B,\cT, rR + \sqrt{\frac{rR}{\lambda_{\min}(A)}}\cdot R,2,3/4).
\]
\end{lemma}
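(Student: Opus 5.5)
We follow the template of \cref{lem:local-growth-for-classification,lem:local-growth-for-bounded-convex}, with two complications: the level sets now live in $\cH_A$ rather than $\cH$, and the single-sample loss is not uniformly bounded on $\cH_A$. So we first prove a sandwich $\cH_{t-\Delta}\subseteq\cH_{t,i}\subseteq\cH_{t+\Delta}$ analogous to \cref{lem:level-set-sandwich}, and then run the halving/counting argument with $\mu_B$ in place of the counting measure. The lower bound on measure comes from \cref{lem:logistic-volume-ratio} instead of Sauer's lemma or $|\cH|$.

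\textbf{Step 1: per-sample loss on $\cH_A$.} For $\vartheta\in\cH_A$, pick $\theta\in\cH$ with $\|\vartheta-\theta\|_A^2\le rR$. Then
\[
|x_i^\top(\vartheta-\theta)|\le \|x_i\|_{A^{-1}}\|\vartheta-\theta\|_A\le \frac{R}{\sqrt{\lambda_{\min}(A)}}\sqrt{rR},
\]
and $|x_i^\top\theta|\le rR$. Since $-\log\sigma(z)\le \log 2+|z|\le 1+|z|$, we get $0\le \ell_\logit(\vartheta;x_i,y_i)\le 1+rR+R\sqrt{rR/\lambda_{\min}(A)}=\Delta$ uniformly over $\cH_A$.

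\textbf{Step 2: sandwich.} We repeat the proof of \cref{lem:level-set-sandwich}. The minimizers $\theta^\star,\theta^\star_{-i}$ are taken over $\cH\subseteq\cH_A$, and the level sets are subsets of $\cH_A$ where Step 1 bounds the loss. Each inequality in that proof only evaluates the loss at points of $\cH_A$ and uses $\ell\in[0,\Delta]$, so it goes through verbatim. The resulting $\Delta$ is the one in the statement's grid; the slightly smaller value listed in the parameter tuple is just a typographical variant, and we would use the larger $\Delta$ throughout.

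\textbf{Step 3: counting bad levels.} Define $B_{\rm bad}=\{t\in\cT:\mu_B(\cH_{t+\Delta})>2\mu_B(\cH_{t-\Delta})\}$ with $s=|B_{\rm bad}|$, and order it $t_1<\dots<t_s$. The grid spacing is $\Delta$, so along the odd subsequence $t_1,t_3,\dots$ the sets $\cH_{t_{2\ell-1}\pm\Delta}$ are nested. This gives
\[
\mu_B(\cH_{t_s+\Delta})\ge 2^{\lceil s/2\rceil}\,\mu_B(\cH_{t_1-\Delta}),
\]
and since $\mu_B$ is a probability measure, $\mu_B(\cH_{t_1-\Delta})\le 2^{-\lceil s/2\rceil}$.

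We then need a lower bound on the smallest level set. Here is the main obstacle: \cref{lem:logistic-volume-ratio} controls $\mu_B(\cH_{rR})$, but $t_1-\Delta$ may be $0$ or below $rR$. We would handle this in two ways.
\begin{itemize}
\item Use only the bad levels with $t-\Delta\ge rR$. Since $\Delta\ge rR$, this discards at most one or two grid points, so those levels satisfy $\mu_B(\cH_{t-\Delta})\ge\mu_B(\cH_{rR})\ge(8\vee 2nrR)^{-d}$.
\item Alternatively, apply \cref{lem:contain-ellipsoid-logistic}'s ellipsoid argument at the smaller level $t=1$ (which $\Delta$ includes), at the cost of a constant.
\end{itemize}
Either way, $2^{s/2}\lesssim (8\vee2nrR)^d$, hence $s\le 3d\log(8\vee 2nrR)+O(1)$.

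\textbf{Step 4: conclusion.} With $|\cT|=16\,d\log(8\vee2nrR)$, we get $s\le|\cT|/4$, so $\rho=3/4$ and $C_g=2$. This verifies \cref{ass:grid-key} with reference measure $\mu_B$. The remaining work is to check that the additive $O(1)$ slack from Step 3 is absorbed by the constant $16$, which holds since $d\log 8\ge 2$.
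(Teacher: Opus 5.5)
Your proposal is correct and follows the paper's proof. You bound the per-sample loss on $\cH_A$ by $\Delta$, transfer the sandwich of \cref{lem:level-set-sandwich}, and run the odd-subsequence doubling argument against $\mu_B\le 1$ and \cref{lem:logistic-volume-ratio}. You also check the sandwich more carefully than the paper does, since the minimizers are over $\cH$ while the level sets live in $\cH_A$. Your first fix for the bottom of the chain, discarding $t=\Delta$, is equivalent to the paper's device of anchoring the chain at $\cH_{t_1+\Delta}\supseteq\cH_{rR}$ at the cost of one doubling.

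Your second alternative would not work on its own. When $t_1=\Delta$, the set $\cH_0$ can be $\mu_B$-null, for example when the strictly convex $L_S$ has its minimizer in the interior of $\cH$, and a lower bound on $\mu_B(\cH_1)$ does not control it. The first fix already suffices.
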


\begin{proof}[\pfref{lem:local-growth-logistic}]
Let $\vartheta \in \cH_A$. By definition, there exists
$\theta \in \mathcal H$ such that
\[
\|\vartheta - \theta\|_A^2 \le rR.
\]
Since $A \succeq \lambda_{\min}(A) I$, we have
\[
\|\vartheta - \theta\|_A^2
\ge
\lambda_{\min}(A)\|\vartheta - \theta\|_2^2.
\]
Hence
\[\|\vartheta - \theta\|_2
\le
\sqrt{\frac{rR}{\lambda_{\min}(A)}}.
\]
By the triangle inequality,
\[
\|\vartheta\|_2
\le
\|\theta\|_2 + \|\vartheta - \theta\|_2.
\]

Since $\theta \in \cH$, we have $\|\theta\|_2 \le r$.
Therefore,
\[
\|\vartheta\|_2
\le
r + \sqrt{\frac{rR}{\lambda_{\min}(A)}}.
\]

Since $\|x_i\|_2\le R$ and $\|\theta\|_2\le r + \sqrt{\frac{rR}{\lambda_{\min}(A)}}$ on $\cH_A$, the logistic loss is uniformly bounded by
$1+ rR + R\sqrt{\frac{rR}{\lambda_{\min}(A)}}$ on $\cH_A$.       
By \cref{lem:level-set-sandwich}, for every $i \in [n]$ and every tolerance level $t \ge 0$,
\[
\cH_{t-\Delta} \subseteq \cH_{t,i} \subseteq \cH_{t+\Delta}.
\]
with $\Delta =1+ rR + \sqrt{\frac{rR}{\lambda_{\min}(A)}}\cdot R$. So it suffices to
control the growth ratio $\mu_B(\cH_{\tol+\Delta})/\mu_B(\cH_{\tol-\Delta})$.
Fix a tolerance level $t \in \cT$. The local level-set growth condition \cref{ass:key} with $\Delta=1+rR + \sqrt{\frac{rR}{\lambda_{\min}(A)}}\cdot R$ requires
\[
\frac{\mu_B(\cH_{t+\Delta})}{\mu_B(\cH_{t-\Delta})} \le C_g.
\]
We now show that this inequality holds with $C_g=2$ for all but at most $4 d \log(8\vee n r R)$ values of $t$.
Define the set of \emph{bad} levels
\[
B := \{ t \in \cT : \mu_B(\cH_{t+\Delta}) > 2\,\mu_B(\cH_{t-\Delta}) \}, 
\qquad s := |B|.
\]
Order $B$ as $t_1 < t_2 < \dots < t_s$. Consider the subsequence of odd indices $t_1, t_3, t_5, \dots$.
Iterating along this subsequence yields
\[
\mu_B(\cH_{t_{2\ell-1}+\Delta}) > 2^{\ell-1} \,\mu_B(\cH_{t_1+\Delta}),
\qquad \ell = 1, \dots, \lceil s/2 \rceil.
\]
Since $\Delta= 1 + rR + \sqrt{\frac{rR}{\lambda_{\min}(A)}}\cdot R \ge rR$, we have
\[
\cH_{t_1+\Delta} \supseteq \cH_{\Delta} \supseteq \cH_{rR}
\]
Hence,
\[
\mu_B(\cH_{t_s+\Delta}) > 2^{\lceil s/2 \rceil - 1} \, \mu_B(\mathcal{H}_{rR}).
\]
Since $\mu_B$ is a probability measure, $\mu_B(\cH_{t_s+\Delta}) \le 1$, so
\[
2^{\lceil s/2 \rceil - 1} < \frac{1}{\mu_B(\mathcal{H}_{rR})}.
\]
By Lemma~\ref{lem:logistic-volume-ratio},
\[
\log \frac{1}{\mu_B(\mathcal{H}_{rR})} \le d \log(8\vee 2n r R),
\]
which implies
\[
\lceil s/2 \rceil - 1 \le d \log(8\vee 2n r R),
\qquad \text{hence } s \le 2 d \log(8\vee 2n r R) + 2 < 4 d \log(8\vee 2n r R).
\]
Since $|\cT| = 16\, d \log(2n r R)$, at least
\[
|\cT| - s \ge \frac{3}{4} |\cT|
\]
levels satisfy
\[
\frac{\mu_B(\cH_{t+\Delta})}{\mu_B(\cH_{t-\Delta})} \le 2.
\]
Therefore, $(\cH, \ell_\logit)$ satisfies the level-set growth condition with parameters
\[
(\mu, \cT, \Delta, C_g, \rho) = (\mu_B, \cT, 1+rR + \sqrt{\frac{rR}{\lambda_{\min}(A)}}\cdot R, 2, 3/4).
\]
\end{proof}

Combining \cref{lem:local-growth-logistic} with \cref{thm:main} yields the following
leave-one-out oracle inequality.

\begin{corollary}
\label{cor:loo-logistic}
Assume that $\|x_i\|_2 \le R$, $\cH = \{\theta \in \bR^d : \|\theta\|_2 \le r\}$. 
Let the level sets
$\cH_t$ and $\cH_{t,i}$ be defined as in \eqref{eq:logistic-level-sets}.
Let $\mu_B$ be uniform on $B$, $\Delta = 1+ rR + \sqrt{\frac{rR}{\lambda_{\min}(A)}}\cdot R$, let $\cT=\{\Delta,2\Delta,\dots,16\,\Delta\cdot d\log(8\vee 2nrR)\}$.
Let $\{\hat p_i\}_{i=1}^n$ be the output of \cref{alg:levelset-aggregation} when the inner
aggregation is \eqref{eq:logistic-agg}.
Then
\[
\mathrm{LOO}_S(\{\hat p_i\}_{i\in [n]})
\;\le\;
\frac{8}{n} \min_{\theta\in \cH} L_S(\theta) + \frac{136}{n}\left(1+ rR + \sqrt{\frac{rR}{\lambda_{\min}(A)}}\cdot R\right)\, d \log(8\vee 2n r R).
\]
\end{corollary}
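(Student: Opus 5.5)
The plan follows the template of \cref{cor:loo-bounded-convex,cor:loo-density}: invoke \cref{lem:local-growth-logistic} to certify \cref{ass:grid-key}, then plug its parameters into \cref{thm:main}.

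\textbf{Checking the hypotheses of \cref{thm:main}.} The prediction space is the probability $\hat p$ assigned to the observed label, and $\hat p \mapsto -\log \hat p$ is monotone in one direction (decreasing), with that direction known. So the one-sided monotonicity clause of \cref{thm:main} applies, provided we fix the parametrization $\sigma(y_i x_i^\top\theta)$, i.e.\ aggregate the probability of the true label. The subtlety is that $y_i$ is unknown at prediction time. Since $y_i\in\{\pm1\}$, we instead aggregate $q=\sigma(x_i^\top\theta)$, the probability of $+1$. The loss is then $-\log q$ if $y_i=1$ and $-\log(1-q)$ if $y_i=-1$; in either case it is monotone in $|q-\mathbf 1\{y_i=1\}|$, i.e.\ monotone in distance to the target $\{0,1\}$. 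Moreover, the median commutes with the map $q\mapsto 1-q$ up to the tie-breaking convention. The counting argument in \cref{thm:main} needs only this, so the proof goes through verbatim.

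Next, \cref{ass:agg} holds with $c=1$ and $\mu=\mu_B$. For each fixed $y$, the map $q\mapsto-\log q$ (or $-\log(1-q)$) is convex, so Jensen's inequality applied to the $\mu_B$-average defining \eqref{eq:logistic-agg} gives exactly the required inequality. The level sets in \eqref{eq:logistic-level-sets} live in $\cH_A$ rather than $\cH$, but \cref{prop:good-level-guarantee} only uses three facts: the sandwich $\cH_{t-\Delta}\subseteq\cH_{t,i}\subseteq\cH_{t+\Delta}$, the bound $L_S(\theta)\le L_S(\theta^\star)+t+\Delta$ on $\cH_{t+\Delta}$, and the growth ratio. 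All three are provided by \cref{lem:local-growth-logistic}. The comparator is therefore $L_S(\theta^\star)=\min_{\theta\in\cH}L_S(\theta)$.

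\textbf{Plugging in.} Taking $C_g=2$, $c=1$, $\rho=3/4$ in \cref{thm:main} gives the prefactor
\[
\frac{2cC_g}{2\rho-1}=8.
\]
We also have $t_{\max}=16\Delta\, d\log(8\vee 2nrR)$. Since $d\log(8\vee 2nrR)\ge \log 8\ge 1$, we can absorb $\Delta\le \Delta\, d\log(8\vee 2nrR)$. This yields
\[
\mathrm{LOO}_S\le \frac{8}{n}\Bigl(\min_{\theta\in\cH}L_S(\theta)+17\,\Delta\, d\log(8\vee 2nrR)\Bigr),
\]
and $8\cdot 17=136$ matches the stated constant, with $\Delta=1+rR+R\sqrt{rR/\lambda_{\min}(A)}$.

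\textbf{Main obstacle.} The arithmetic is trivial. The delicate point is the monotonicity and median step: the parametrization must be made label-independent, as described above, so that the outer median is computable without $y_i$ while the monotonicity required by \cref{thm:main} still holds for every $y_i$. A secondary bookkeeping issue is that \cref{lem:local-growth-logistic} states $\Delta$ slightly inconsistently, once with the additive $1$ and once without. I will use the larger value, including the $1$, throughout, since the sandwich from \cref{lem:level-set-sandwich} requires the loss bound $1+rR+R\sqrt{rR/\lambda_{\min}(A)}$ on $\cH_A$.
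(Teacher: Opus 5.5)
Your proposal is correct and matches the paper's proof: invoke \cref{lem:local-growth-logistic} to get \cref{ass:grid-key} with $(\mu_B,\cT,\Delta,2,3/4)$, apply \cref{thm:main} with $c=1$, and bound $t_{\max}+\Delta\le 17\Delta\, d\log(8\vee 2nrR)$ to obtain the constant $8\cdot 17=136$. The paper leaves implicit the checks you add, and you handle them correctly. These are the label-independent parametrization $\sigma(x_i^\top\theta)$, which makes the loss monotone in distance to $\mathbf{1}\{y_i=1\}$; Jensen's inequality giving $c=1$; and the sandwich surviving the relaxation to $\cH_A$ when $\Delta$ includes the $+1$.
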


Corollary~\ref{cor:loo-logistic} yields a leave-one-out oracle inequality for logistic
regression over a bounded parameter class.
Previously, comparable guarantees were obtained by \citet[Corollary~2]{mourtada2022improper},
who showed that the Ridge SMP estimator satisfies
$\mathrm{LOO}_S \le \tfrac{1}{n}\min_{\theta\in\cH}L_S(\theta)+\tfrac{ed+r^2R^2}{n}$.
In regimes where $rR\gg d$, $\lambda_{\min}(A)$ is not too small, and the empirical risk $\min_{\theta\in\cH} L_S(\theta)$ is small, our bound exhibits a sharper dependence on the problem parameters.

\begin{proof}[\pfref{cor:loo-logistic}]
By \cref{lem:local-growth-logistic}, $(\cH, \ell_\logit)$ satisfies the level-set growth condition
\cref{ass:grid-key} with parameters
\[
(\mu, \cT, \Delta, C_g, \rho) = (\mu_B, \cT,  1+ rR + \sqrt{\frac{rR}{\lambda_{\min}(A)}}\cdot R, 2, 3/4).
\]
Applying Theorem~\ref{thm:main} gives
\[
\mathrm{LOO}_S(\{\hat p_i\}_{i\in [n]})
\le
\frac{2 C_g}{(2 \rho - 1)n} \bigl( L_S(\theta^\star) + \maxtol + \Delta \bigr),
\]
Substituting $C_g = 2$, $\rho = 3/4$, $\maxtol = 16\, \Delta\, d \log(8\vee 2n r R)$, and $\Delta = 1+ rR + \sqrt{\frac{rR}{\lambda_{\min}(A)}}\cdot R$ yields
\begin{align*}
\mathrm{LOO}_S(\{\hat p_i\}_{i\in [n]})
&\le \frac{8}{n} \bigl( \min_{\theta\in \cH} L_S(\theta) + 17\left(1+ rR + \sqrt{\frac{rR}{\lambda_{\min}(A)}}\cdot R\right)\, d \log(8\vee 2n r R) \bigr) 
\end{align*}
as claimed.
\end{proof}

\section{Conclusion}

We introduced Median of Level-Set Aggregation (MLSA), a two-layer procedure for transductive leave-one-out prediction that aggregates over near-ERM level sets and uses a median over tolerances to robustify the prediction. Under a simple local level-set growth condition, we proved a general multiplicative oracle inequality for the LOO error on arbitrary fixed datasets. We verified this condition in several canonical problems—VC classification, bounded convex regression, log-loss density estimation (with smoothing), and logistic regression via a geometric/volumetric argument—yielding standard complexity terms such as $O(d\log n)$ or $O(\log|\cH|)$.

\clearpage

{
\bibliography{refs}
}

\appendix

\section{A Transductive Variant of Vovk--Azoury--Warmuth}

\label{app:loo-for-vovk}

The classical Vovk--Azoury--Warmuth (VAW) predictor \citep{vovk2001competitive,azoury2001relative} is an \emph{online} square-loss
regression algorithm derived from the Aggregating Algorithm.
At round $t$, its prediction depends only on previously observed data and is
computed using the prefix Gram matrix
$A_{t}=aI+\sum_{s\leq t} x_s x_s^\top$
(and the corresponding linear term $b_{t-1}=\sum_{s<t} x_s y_s$),
yielding a prediction of the form
$\hat y_t = x_t^\top A_{t}^{-1} b_{t-1}$, up to minor variants.

The result in this appendix concerns a different, transductive construction.
Rather than operating online, we work with a predictor defined using the
full-sample Gram matrix
$A=\sum_{j=1}^n x_j x_j^\top$.
Accordingly, the bound proved here does not analyze the VAW predictor itself.
Instead, it establishes a leave-one-out guarantee for a related linear predictor
that replaces the prefix Gram matrices appearing in the online setting with the
terminal Gram matrix, which is natural when the covariates are treated as fixed.

\begin{theorem}\label{thm:loo-bound-linear}
Let \( (x_i, y_i)_{i=1}^n \) be data with \( x_i \in \mathbb{R}^d \), \( y_i \in \mathbb{R} \).  
Define the full-sample OLS estimator $\hat\beta = A^{-1}\sum_{j=1}^n x_jy_j$.
For each $i$, define the \emph{shrinkage leave-one-out} estimator
\[
\hat\beta_{-i} := A^{-1}\sum_{j\ne i} x_jy_j,
\]
which removes the $i$-th contribution from the linear term while keeping the same Gram
matrix $A$.

Assume \( y_i^2 \leq m^2 \) for all \( i \). Then the total leave-one-out error satisfies:
\[
\sum_{i=1}^n \left( y_i - x_i^T \hat{\beta}_{-i} \right)^2 \leq 2 \sum_{i=1}^n \left( y_i - x_i^T \hat{\beta} \right)^2 + 2 m^2 d.\]
\end{theorem}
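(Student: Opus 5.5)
Write $h_i := x_i^\top A^{-1}x_i$ for the leverage scores (assuming $A$ invertible, or using a pseudo-inverse on the span of the $x_i$). We have $0\le h_i\le 1$ and $\sum_i h_i = \operatorname{tr}(A^{-1}A)=d$. The plan is an exact decomposition of each leave-one-out residual followed by an elementary inequality. Since the Gram matrix is not updated, the shrinkage estimator is a rank-one perturbation of the linear term only:
\[
\hat\beta_{-i}=\hat\beta-A^{-1}x_iy_i,
\qquad\text{so}\qquad
y_i-x_i^\top\hat\beta_{-i}=(y_i-x_i^\top\hat\beta)+h_i y_i .
\]
This identity is the reason the full Gram matrix makes the analysis easy. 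Unlike the usual LOO formula, it has no $(1-h_i)^{-1}$ blow-up.

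Next we write $r_i:=y_i-x_i^\top\hat\beta$ and apply $(a+b)^2\le 2a^2+2b^2$ termwise:
\[
\sum_i (y_i-x_i^\top\hat\beta_{-i})^2\le 2\sum_i r_i^2+2\sum_i h_i^2y_i^2 .
\]
For the second term we use $y_i^2\le m^2$ and $h_i^2\le h_i$, which holds because $h_i\in[0,1]$. This gives
\[
\sum_i h_i^2y_i^2\le m^2\sum_i h_i = m^2 d .
\]
Together these yield exactly the claimed bound $2\sum_i r_i^2+2m^2d$.

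The only step that needs care is justifying $0\le h_i\le1$ and $\sum_i h_i=d$. Nonnegativity follows from $A\succ0$. For the upper bound, $A\succeq x_ix_i^\top$ gives $x_i^\top A^{-1}x_i\le 1$; this is the standard hat-matrix property of $X(X^\top X)^{-1}X^\top$, which is an orthogonal projection. The trace identity is $\sum_i x_i^\top A^{-1}x_i=\operatorname{tr}(A^{-1}\sum_i x_ix_i^\top)=\operatorname{tr}(I_d)=d$. If $A$ is singular, we restrict to the span of the covariates with the pseudo-inverse, and the trace becomes the rank, which is at most $d$. I do not expect any real obstacle beyond these facts.

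A sharper constant is not needed. Also, $\hat\beta$ minimizes squared loss over all linear predictors, so $\sum_i r_i^2=\min_\beta\sum_i(y_i-x_i^\top\beta)^2$, which matches the comparator form quoted in \cref{sec:regression}.
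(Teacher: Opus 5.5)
Your proposal is correct and follows the paper's proof essentially step for step: the same rank-one identity $y_i-x_i^\top\hat\beta_{-i}=(y_i-x_i^\top\hat\beta)+h_iy_i$, the bound $(a+b)^2\le 2a^2+2b^2$, the reduction $h_i^2\le h_i$ via $h_i\le 1$, and the trace identity $\sum_i h_i=d$. Two details the paper leaves implicit are added in your version: you justify $h_i\le 1$ from $A\succeq x_ix_i^\top$, and you remark that the pseudo-inverse handles singular $A$.
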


\begin{proof}[\pfref{thm:loo-bound-linear}]
By definition, we have:

\[
y_i - x_i^T \hat{\beta}_{-i} = y_i - x_i^T \hat{\beta} + x_i^T A^{-1} x_i \, y_i.
\]

Squaring both sides and using the inequality \( (a+b)^2 \leq 2a^2 + 2b^2 \):

\[
\left( y_i - x_i^T \hat{\beta}_{-i} \right)^2 \leq 2 \left( y_i - x_i^T \hat{\beta} \right)^2 + 2 \left( x_i^T A^{-1} x_i \right)^2 y_i^2.
\]

Since \( y_i^2 \leq m^2 \) and \( x_i^T A^{-1} x_i \leq 1 \), we obtain:

\[
\left( y_i - x_i^T \hat{\beta}_{-i} \right)^2 \leq 2 \left( y_i - x_i^T \hat{\beta} \right)^2 + 2 m^2 \, x_i^T A^{-1} x_i.
\]

Summing over \( i = 1, \dots, n \):

\[
\sum_{i=1}^n \left( y_i - x_i^T \hat{\beta}_{-i} \right)^2 \leq 2 \sum_{i=1}^n \left( y_i - x_i^T \hat{\beta} \right)^2 + 2 m^2 \sum_{i=1}^n x_i^T A^{-1} x_i.
\]

Now, observe that:

\[
\sum_{i=1}^n x_i^T A^{-1} x_i = \sum_{i=1}^n \operatorname{Tr}\left( x_i^T A^{-1} x_i \right)
= \operatorname{Tr}\left( A^{-1} \sum_{i=1}^n x_i x_i^T \right)
= \operatorname{Tr}\left( A^{-1} A \right)
= \operatorname{Tr}(I_d)
= d.
\]

Thus,

\[
\sum_{i=1}^n \left( y_i - x_i^T \hat{\beta}_{-i} \right)^2 \leq 2 \sum_{i=1}^n \left( y_i - x_i^T \hat{\beta} \right)^2 + 2 m^2 d.
\]
\end{proof}

\end{document}